\documentclass[11pt]{article}

\usepackage[T1]{fontenc}
\usepackage{lmodern}
\usepackage{microtype}
\usepackage[a4paper,margin=1in]{geometry}
\usepackage{amsmath,amssymb,amsthm,mathtools}
\usepackage{aliascnt}
\usepackage{bm}
\usepackage{booktabs}
\usepackage{array}
\usepackage{enumitem}
\usepackage[ruled,vlined,linesnumbered]{algorithm2e}
\usepackage[round,authoryear]{natbib}
\usepackage{xcolor}
\usepackage{url}
\usepackage{fancyhdr}
\usepackage{hyperref}
\usepackage[nameinlink,capitalize,noabbrev]{cleveref}

\hypersetup{
  colorlinks=true,
  linkcolor=blue!55!black,
  citecolor=green!40!black,
  urlcolor=blue!55!black,
  pdftitle={Balanced Exchanges: Polynomial-Time (1-1/e)-Regret for Bandit Submodular Maximization under Matroid Constraints},
  pdfauthor={Zongqi Wan and Zhijie Zhang},
  pdfsubject={Adversarial bandit submodular maximization under matroid constraints},
  pdfkeywords={submodular maximization, matroid, bandit feedback, online learning, Poisson process, fractional exchange}
}

\theoremstyle{plain}
\newtheorem{theorem}{Theorem}[section]
\newaliascnt{lemma}{theorem}
\newtheorem{lemma}[lemma]{Lemma}
\aliascntresetthe{lemma}
\newaliascnt{proposition}{theorem}
\newtheorem{proposition}[proposition]{Proposition}
\aliascntresetthe{proposition}
\newaliascnt{corollary}{theorem}

\aliascntresetthe{corollary}
\theoremstyle{definition}
\newaliascnt{definition}{theorem}
\newtheorem{definition}[definition]{Definition}
\aliascntresetthe{definition}
\theoremstyle{remark}
\newaliascnt{remark}{theorem}
\newtheorem{remark}[remark]{Remark}
\aliascntresetthe{remark}

\newif\ifomitproofs
\omitproofsfalse
\ifomitproofs
  \RenewDocumentEnvironment{proof}{o +b}{}{}
\fi

\crefname{algocf}{algorithm}{algorithms}
\Crefname{algocf}{Algorithm}{Algorithms}
\crefname{appsec}{appendix}{appendices}
\Crefname{appsec}{Appendix}{Appendices}
\crefname{theorem}{theorem}{theorems}
\Crefname{theorem}{Theorem}{Theorems}
\crefname{lemma}{lemma}{lemmas}
\Crefname{lemma}{Lemma}{Lemmas}
\crefname{proposition}{proposition}{propositions}
\Crefname{proposition}{Proposition}{Propositions}
\crefname{corollary}{corollary}{corollaries}
\Crefname{corollary}{Corollary}{Corollaries}
\crefname{definition}{definition}{definitions}
\Crefname{definition}{Definition}{Definitions}
\crefname{remark}{remark}{remarks}
\Crefname{remark}{Remark}{Remarks}

\allowdisplaybreaks
\numberwithin{equation}{section}
\setlist[itemize]{leftmargin=1.55em,itemsep=0.2em,topsep=0.3em}
\setlist[enumerate]{leftmargin=1.75em,itemsep=0.2em,topsep=0.3em}
\newcommand{\U}{\mathcal U}
\newcommand{\I}{\mathcal I}
\newcommand{\B}{\mathcal B}
\newcommand{\M}{\mathcal M}
\newcommand{\E}{\mathbb E}
\newcommand{\Prb}{\mathbb P}
\newcommand{\1}{\mathbf 1}
\newcommand{\eps}{\varepsilon}
\newcommand{\ind}{\mathbf 1}

\newcommand{\argmax}{\operatorname*{arg\,max}}
\newcommand{\argmin}{\operatorname*{arg\,min}}

\newcommand{\cH}{\mathcal H}
\newcommand{\cF}{\mathcal F}
\newcommand{\cE}{\mathcal E}

\newcommand{\wt}{\widetilde}
\newcommand{\wh}{\widehat}
\newcommand{\event}{J}
\newcommand{\ip}[2]{\left\langle #1,#2\right\rangle}

\newcommand{\cset}[1]{\left\{#1\right\}}

\newcommand{\deq}{\triangleq}

\title{Bandit Submodular Maximization under Matroid Constraints: Learning Compressed Exchange Policy}

\author{
\begin{minipage}[t]{0.4\textwidth}
\centering
Zongqi Wan\\[0.25em]
\small Great Bay University\\
\small \href{mailto:zqwan@gbu.edu.cn}{\texttt{zqwan@gbu.edu.cn}}
\end{minipage}
\hfill
\begin{minipage}[t]{0.4\textwidth}
\centering
Zhijie Zhang\\[0.25em]
\small Fuzhou University \\
\small \href{mailto:zzhang@fzu.edu.cn}{\texttt{zzhang@fzu.edu.cn}}
\end{minipage}
}
\date{ }

\begin{document}
\maketitle

\begin{abstract}
We study adversarial bandit maximization of monotone submodular functions under a matroid constraint.  For a rank-$k$ matroid on $n$ elements, we give a randomized oracle-polynomial algorithm that makes one feasible value query per round and has expected $(1-1/e)$-regret $\widetilde O(n^{1/3}k^{2/3}T^{2/3})$.  This is the first sublinear-regret algorithm for adversarial bandit submodular maximization under general matroid constraints.

Technically, we view the problem as learning an exchange policy for the Poisson base walk.  This connects the problem to contextual bandits and gives an information-theoretic sublinear-regret guarantee, but directly learning the exponentially many policies requires exponential time and space.  We therefore introduce \emph{balanced fractional exchanges}, which compress the policy mixture into a single fractional base while retaining the exchange information needed by the Poisson analysis.  This leads to an polynomial time algorithm with the same regret guarantee.
\end{abstract}

\tableofcontents
\newpage

\section{Introduction}
\label{sec:intro}

Selecting a collection of items whose joint value exhibits diminishing returns is a basic problem in combinatorial optimization.  The value of adding a new item is often largest when few related items have already been selected: an additional seed user reaches fewer new individuals after many influential users have been chosen, and an additional sensor supplies less information once nearby locations are already monitored.  Monotone submodular functions formalize precisely this phenomenon.  They underlie, among many other examples, influence maximization in social networks \citep{kempe2003} and information-based sensor placement \citep{krause2008}.  Matroid constraints complement the objective by expressing structured feasibility.  Uniform matroids encode cardinality budgets, partition matroids encode group or position quotas, and graphic and linear matroids impose global independence requirements.  Thus, monotone submodular maximization over a matroid provides a common abstraction for selecting a valuable yet nonredundant feasible collection.

The offline submodular maximization problem has a long history.  For a cardinality constraint, the classical greedy algorithm achieves the factor $1-1/e$ \citep{nemhauser1978}.  Under a general matroid constraint, the elementary greedy algorithm no longer attains this factor; the optimal polynomial-time guarantee was recovered by continuous greedy, together with the multilinear extension and dependent rounding \citep{vondrak2008,calinescu2011,chekuri2010}.  Non-oblivious local search provides a discrete alternative with the same approximation factor \citep{filmusward2014,buchbinderfeldman2024}.  Most recently, \citet{ganz2026poisson} obtained $1-1/e$ through a different nonhomogeneous Poisson walk approach on matroid bases.  Unlike continuous-relaxation algorithms, this walk moves only by feasible single-element exchanges, a feature that will be very helpful in this paper's setting.

In many applications, however, the objective is neither fixed nor known when the decision is made.  A platform repeatedly chooses a slate of recommendations or advertisements before observing user response; a campaign selects seed users before the resulting diffusion is realized; and a monitoring system deploys a feasible sensor configuration before its utility is revealed.  These examples motivate an online model in which, on each round $t$, the learner selects an independent set $S_t$ before seeing the current monotone submodular reward function $f_t$.  Under full-information feedback the learner subsequently receives access to the reward function, whereas under full-bandit feedback it observes only the single scalar $f_t(S_t)$.  The natural benchmark is the best fixed feasible set in hindsight, weakened by the approximation factor that is unavoidable even offline.

Online and bandit submodular maximization were first studied through online-greedy and meta-action reductions.  For cardinality constraints, \citet{streeter2008} obtained $O(T^{2/3})$ $(1-1/e)$-regret; \citet{streeter2009} developed the assignment setting, which corresponds to a structured partition-matroid constraint, with $O(T^{4/5})$ $(1-1/e)$-regret.  Offline-to-online reductions based on Blackwell approachability later gave a general explanation of how robust greedy algorithms can be converted into online algorithms and bandit analogues \citep{niazadeh2023}.  Stochastic full-bandit variants have also been studied under cardinality, knapsack, and general matroid constraints \citep{nie2022,nie2023framework,fourati2024stochasticgreedy,nie2025ksubmodular}.

Turning an offline $(1-1/e)$-approximation into a bandit algorithm faces a basic obstacle: most existing offline methods evaluate auxiliary subsets that need not satisfy the matroid constraint.  Such queries are harmless in the offline value-oracle model, where the algorithm may inspect the objective on any subset.  In the bandit model, however, the queried set is also the action played by the learner.  Under the original, or \emph{strict feasible-query}, model, every such set---including an exploratory set used only to construct an estimator---must be independent.

\citet{zhang2019} made this obstruction precise for the multilinear-extension approach.  They showed that, for some matroid, there is no function-independent randomized rounding scheme supported on feasible sets whose single observed value is an unbiased estimate of the multilinear extension for every submodular objective.  Motivated by this impossibility, they introduced \emph{responsive bandit submodular maximization}: the learner may query an infeasible set and observe its function value, but receives zero reward on that round.  Under this relaxed feedback model, they obtained $O(T^{8/9})$ $(1-1/e)$-regret.  
However, in the original adversarial bandit model, where every queried set must be feasible, no sublinear $(1-1/e)$-regret guarantee was known for arbitrary monotone submodular rewards under general matroid constraints.  Indeed, after establishing such a guarantee for partition matroids, \citet{wan2023bandit} conjectured that sublinear regret might fail for general matroid constraints.

We revisit this problem under the original strict feasible-query model. Our work is organized around the following question:
\begin{quote}
\emph{Can we design a bandit submodular maximization algorithm that achieves sublinear $(1-1/e)$-regret under general matroid constraints in polynomial time?}
\end{quote}
We answer this question affirmatively, thereby disproving the conjecture of \citet{wan2023bandit}.  Our approach builds on the recent Poisson-process method for offline submodular maximization \citep{ganz2026poisson}.  We first formulate its online counterpart as a policy learning problem: the current base of the Poisson walk is the context, a feasible exchange is an action, and each comparator base induces a state dependent exchange policy.  This contextual-bandit viewpoint, which has not appeared in prior work on this problem, gives a possible route to sublinear regret. However, this approach only produce a information theoretic guarantee, because the number of policies is exponentially large and require exponential time and space to excute. We therefore introduce a technique to compress the resulting policy mixture into a fractional base and couple it with each current base of the Poisson walk while preserving the required exchange marginals.  This new technique yields an polynomial time algorithm with sublinear $(1-1/e)$-regret under general matroid constraints.

\paragraph{Main result.} Our main result gives the optimal polynomial-time approximation factor $1-1/e$ in expected approximate regret, formally defined in \Cref{sec:prelim}. Concretely, we obtain the following theorem, which is an informal version of \Cref{thm:optimized-regret}.

\begin{theorem}
\label{thm:intro-main}
Let $T\ge2$, and let $\M$ be a rank-$k$ matroid on $n$ elements. There exists a randomized oracle-polynomial algorithm that makes exactly one feasible value query per round and satisfies
\[
\left(1-\frac1e\right)\max_{O\in\B}\sum_{t=1}^T f_t(O)
-\E\left[\sum_{t=1}^T f_t(S_t)\right]
=\widetilde O\!\left(n^{1/3}k^{2/3}T^{2/3}\right).
\]
\end{theorem}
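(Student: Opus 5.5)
The proof will have three parts: an offline Poisson-walk potential argument that is stable under errors, a blocked explore/exploit bandit reduction, and a compression of the exponentially many comparator policies into a single fractional base. The plan is to follow the Poisson base walk of \citet{ganz2026poisson}, run in discretized time over an epoch structure. The horizon is split into $B$ blocks of length $L=T/B$. Within each block the learner keeps a current base $S$ of the walk. It plays $S$ on most rounds (exploitation) and, on a random small subset of rounds, plays a single feasible exchange $S-x+y$ (exploration). The observed value $f_t(S-x+y)$, reweighted by the inverse sampling probability, gives an unbiased estimate of the block-aggregated marginal gain $\sum_{t\in\text{block}}(f_t(S-x+y)-f_t(S))$. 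Every query is the base $S$ or a single exchange of it, so every query is feasible.

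\textbf{Step 1: offline continuous-time analysis with approximate exchanges.} First I would restate the Poisson walk guarantee in a robust form. If at each step the walk performs an exchange drawn from a distribution whose expected gain is within $\delta$ of the gain of the comparator-induced exchange distribution, then the final expected value is at least $(1-1/e)F(O)-O(\text{total }\delta)$. Here the comparator-induced distribution is the bijective exchange $\pi_O:S\setminus O\to O\setminus S$, and $F=\sum_t f_t$ is restricted to the relevant block. This uses the standard potential/ODE argument $\frac{d}{d\tau}\E F(S_\tau)\ge F(O)-\E F(S_\tau)$ up to the error term, with nonhomogeneous rates integrated over $[0,1]$ to get $1-1/e$. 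I would also bound the discretization error with $B$ steps by $O(kT/B)$-type terms, which is controlled once $B$ is large enough.

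\textbf{Step 2: learning the exchange policy as a contextual bandit.} Each comparator $O$ defines a policy mapping a context (the current base $S$) to a distribution over exchanges. The needed guarantee is regret against the best $O$ uniformly over the contexts visited. Running exponential weights over policies with importance-weighted exchange estimates would give regret $\widetilde O(\sqrt{\text{actions}\cdot\log|\B|\cdot\ldots})$. Here the action space has size at most $nk$ and $\log|\B|\le k\log n$. Balancing the exploration cost $\approx \gamma T$ against the estimator variance $\propto nk\cdot k/\gamma$ then yields the $n^{1/3}k^{2/3}T^{2/3}$ rate, up to logarithmic factors.

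\textbf{Step 3 (main obstacle): compressing the mixture.} Exponential weights over all bases is not polynomial. I would replace the mixture by its marginal vector $z$ in the base polytope. The weights are multiplicative in linear scores $\sum_{e\in O}w_e$, so online mirror descent or FTRL over the base polytope with entropic regularization achieves the same linear-regret bound in polynomial time. The difficulty is that the Poisson analysis needs an exchange distribution at the current base $S$ whose expected gain dominates that of the mixture of $\pi_O$'s. The mixture of bijections is not determined by $z$ alone.

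My first attempt is a \emph{balanced fractional exchange}. By the matroid base exchange property for fractional bases, one decomposes $z-\mathbf 1_S$ into a fractional transport $\phi(x,y)\ge0$ supported on valid exchange pairs ($S-x+y\in\B$). This transport should have row sums $1-z_x$ for $x\in S$ and column sums $z_y$ for $y\notin S$. Existence would follow from a Hall-type/flow argument: max-flow min-cut on the exchange graph, using submodularity of the rank function to verify the cut conditions. It is computable via one LP or flow per step. Submodularity of each $f_t$ then gives $\sum\phi(x,y)\,\Delta(x,y)\ge \langle z,\text{marginals}\rangle-\ldots$, which is exactly what the Step 1 potential inequality requires. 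Plugging the FTRL regret into Step 1 and tuning $B$ and $\gamma$ completes the bound. I expect the flow-existence and gain-domination argument to be the delicate part.
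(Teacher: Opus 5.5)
There is a genuine gap; in fact there are two related ones.

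\textbf{The block architecture does not fit the Poisson analysis.} You advance a single walk across the horizon, one discretized step per block, and play the intermediate base during each block. The inequality behind the $1-1/e$ factor is $q'(s)+q(s)\ge f(O)-k\delta(s)$ with $q(s)=\E[F(s\1_{A(s)})]$. It concerns one fixed function along the whole trajectory. When the adversary changes $f_t$ from block to block, consecutive steps are measured against different potentials, and nothing telescopes. Even for a fixed $f$, only the terminal base carries the $1-1/e$ guarantee; the state at walk time $s$ is only guaranteed roughly $(1-e^{-s})f(O)$. So rewards collected along the trajectory are not competitive with $(1-1/e)\max_O\sum_t f_t(O)$.

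The missing idea is that the walk needs no value queries once the exchange rule depends only on the current base and the learned point. The paper simulates a complete walk from $A_0$ inside every round with $p_s(\cdot\mid A)=Q_A^{x_t}/k$. It plays the terminal base on exploitation rounds, applies \Cref{thm:general-poisson} round by round to $f_t$, and carries only $x_t$ across rounds. This per-round restart is also what makes your Step~2 contextual-bandit view coherent; the blocked design contradicts it.

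\textbf{Your exploration estimates the wrong quantity.} Playing $S$ and $S-i+j$ estimates the discrete gain $f(S-i+j)-f(S)=G_f(S,1;i,j)$. The Poisson deficit instead involves $G_f(S,\tau;i,j)$, or $\partial_jF(\tau\1_S)-\partial_iF(\tau\1_S)$, at the event time $\tau$. There, the drift $\sum_{a\in S}\partial_aF(\tau\1_S)$ produced by the scaling offsets the deletion term and leaves $\sum_{o\in O}\partial_oF(\tau\1_S)\ge f(O)-F(\tau\1_S)$. With discrete gains you only get $\sum_{i\in S}[f(S-i+\pi_O(i))-f(S)]\ge f(O)-2f(S)$, which is the $1/2$ locality ratio of oblivious local search. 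At a local optimum with $f(S)\approx f(O)/2$, the comparator exchanges have nonpositive expected gain. So your Step~1 inequality $\frac{d}{d\tau}\E F(S_\tau)\ge F(O)-\E F(S_\tau)$ fails for the set-valued potential.

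Estimating the scaled quantity with feasible queries is the delicate point you do not address. Naively, $\partial_jF(\tau\1_S)$ requires $f(R+j)$ with $R\sim\tau S$, and $R+j$ need not be independent when $i\in R$. The paper samples $R\sim\tau(S-i)$ and, with a fair coin, queries $R+j$ or $R$. This gives an unbiased $[0,2]$-valued estimate of the insertion loss $1-I_f(S,\tau;i,j)$, where $I_f=\partial_jF(\tau\1_{S-i})\ge\partial_jF(\tau\1_S)$. The deletion term never needs estimating, because the unit row sums of the balanced exchange cancel it exactly against $\zeta_{f,O}$. That is how \Cref{thm:local-linearization} produces a loss linear in the fractional point.

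Your transport, with row sums $1-z_i$ on $S$ and column sums $z_j$ off $S$, is just the paper's $Q_S^{z}$ without the diagonal, so that part is fine. Existence is simpler by averaging Brualdi bijections over a base decomposition of $z$. The $n^{1/3}k^{2/3}$ rate comes from the importance weight $1/z_j$ cancelling against the insertion marginal $z_j/k$ in the entropic local norm, not from a count of $nk$ actions.
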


To our knowledge, this is the first polynomial time sublinear $(1-1/e)$-regret guarantee for bandit monotone submodular maximization in the strict feasible-query bandit problem under general matroid constraint. The dependence on $T$ matches the $T^{2/3}$ scale attained for previous results on uniform matroid and partition matroid \citep{streeter2008,wan2023bandit}.

\subsection{Technical Overview}
\label{sec:technical-overview}

The starting point is the Poisson process approximation algorithm of \citet{ganz2026poisson}.  It evolves a feasible base through a continuous-time Poisson process, performing at each arrival a single-element exchange that satisfies a suitable local improvement condition.  These local conditions imply a differential inequality whose integration yields the $1-1/e$ approximation guarantee.  This form is particularly well suited to bandit feedback because the subset maintained by the algorithm can remain feasible throughout.  The difficulty is that an online learner must choose each exchange before observing the current reward function $f_t$ and hence cannot directly enforce the required local condition.  The Poisson analysis localizes the resulting loss: the gap in the final approximation is controlled by the accumulated deficits of the exchanges actually taken.  Our task is therefore to learn these exchanges from one feasible query per round.

\paragraph{Why the standard meta-framework does not apply.}
A standard way to onlineize a sequential approximation algorithm is to associate a no-regret linear optimizer with each update step: on every round, the optimizers generate the updates in sequence, and the optimizer at a given step is trained on the marginal objective induced by the preceding updates \citep{streeter2008,streeter2009,niazadeh2023}.  This template presumes that each step has a fixed action domain and that the offline compar ator can be represented by a fixed action for the corresponding optimizer.  The random number of Poisson arrivals can be handled by truncation, but the state dependence of an arrival cannot.  Indeed, at a given arrival, both the set of feasible exchanges and their gains depend on the current random base $A$.  Even for a fixed comparator base $O$, the exchange that certifies the Poisson improvement condition is selected through a Brualdi matching between $A$ and $O$, and therefore varies with $A$ rather than being a fixed pair $(i,j)$.  Thus, an online optimizer whose actions are individual exchanges has no fixed comparator valid across rounds.


\paragraph{First try: an information-theoretic policy learner.} 
To address the above barrier, an action must instead specify how to exchange from every possible current base, that is, it must be a state-dependent exchange policy. For every base $O$, fix a complete exchange policy $\pi_O$: whenever the Poisson walk is at a base $A$, the policy chooses a uniformly random element $i\in A$ and inserts the element matched to $i$ by a fixed Brualdi bijection from $A$ to $O$.  The policy class thus contains, for every possible hindsight base, a rule that is locally correct at every state that the walk may visit.  Treating the current base as the context and a feasible exchange as the action, one may run the \textsc{Exp4} algorithm over this finite policy class \citep{auer2002nonstochastic}.  With standard exploration, its finite-policy regret bound controls the accumulated Poisson deficits and yields sublinear approximation regret.  Although the policy class is exponentially large, the regret depends only logarithmically on its cardinality.  Computationally, however, \textsc{Exp4} maintains one weight per policy and therefore requires exponential time and memory.

\paragraph{Using balanced fractional exchange to compress the policy mixture.}
To resolve the computational issue, a key observation is that the Poisson process analysis does not require the identity of the mixed comparator. Let $W_t$ be the distribution over comparator policies in the above exponential time algorithm and set
\[
x_t
\deq
\sum_{O\in\B}W_t(O)\1_O
\in P(\M).
\]
When the Brualdi exchanges are averaged according to $W_t$, the resulting exchange distribution has two simple marginals: the element removed from the current base is uniform, and the element inserted has distribution $x_t/k$.  The Poisson analysis uses only these marginals, not the identity of the policy from which an exchange originated.  Thus the exponentially supported distribution $W_t$ can be replaced by the single fractional base $x_t$. We can therefore learn $x_t$ directly, without maintaining any distribution over bases.

The remaining question is whether these two marginals can be realized at every current base using only feasible exchanges.  This is precisely the role of a balanced fractional exchange.  Given $A$ and $x_t$, we construct a nonnegative matrix $Q_A^{x_t}$ supported on pairs $(i,j)$ for which $A-i+j$ is a base and satisfying
\[
\sum_{j\in\U}Q_A^{x_t}(i,j)=1
\quad(i\in A),
\qquad
\sum_{i\in A}Q_A^{x_t}(i,j)=x_{t,j}
\quad(j\in\U).
\]
Sampling an exchange from $Q_A^{x_t}/k$ therefore reproduces exactly the deletion and insertion marginals of the policy mixture.  We prove that, for every current base $A$ and every $x_t\in P(\M)$, such a matrix exists and can be constructed in polynomial time as a transportation flow on the bipartite exchange graph of $A$.

\paragraph{From balanced exchanges to a linear bandit.}
The marginal identities of a balanced exchange allow us to aggregate the state-dependent deficits along the Poisson trajectory without tracking the individual comparator-specific exchange rules.  Combining these identities with a local linearization, we upper-bound the cumulative Poisson deficit by the regret of an online linear optimization problem over the matroid base polytope: the learner chooses a fractional base $x_t$, and every hindsight base $O$ is represented by the fixed vertex $\1_O$.  The original bandit submodular problem is thereby reduced to a specialized linear bandit problem.  Its linear loss is not observed directly, but must be estimated from one feasible value query, which motivates the separate estimator construction below.

\paragraph{A feasible leave-one-out estimator.}
Constructing the estimator requires a further idea independent of the balanced-exchange representation.  The local improvement condition in the original Poisson algorithm is expressed through the gradient gain $\partial_jF(\tau\1_A)-\partial_iF(\tau\1_A)$.  Directly sampling its insertion term may require evaluating $f(S+j)$ for $S\sim\tau A$, which need not be feasible when $S$ contains the element $i$ that must be removed before inserting $j$.  The leave-one-out gain avoids this obstruction by sampling $R\sim\tau(A-i)$: both $R+j\subseteq A-i+j$ and $R+i\subseteq A$ are feasible.  Moreover, it equals the normalized multilinear jump caused by the exchange and dominates the original gradient gain, so it preserves the Poisson analysis.  After linearization leaves only an insertion loss, a fair-coin randomization between querying $R+j$ and $R$ gives a nonnegative unbiased estimate of this loss.  Thus the required feedback is obtained from one feasible value query.

The remaining online-learning step is standard: importance weighting turns the one-query statistic into an unbiased estimate of the linear loss, and entropic mirror descent controls its cumulative regret.  Together with a capped Poisson simulation and polynomial-time implementations of the balanced exchange flow and the entropy projection, we obtain the mentioned result.

\subsection{Related Work}

\paragraph{Offline submodular maximization.}
The cardinality-constrained greedy algorithm and its extensions initiated the approximation theory of monotone submodular maximization \citep{nemhauser1978}. Under a matroid constraint, continuous greedy and pipage or swap rounding achieve $1-1/e$ \citep{vondrak2008,calinescu2011,chekuri2010}. Non-oblivious local-search methods provide alternative proofs of the same factor \citep{filmusward2014,buchbinderfeldman2024}. The recent Poisson-process method of \citet{ganz2026poisson} realizes the approximation through a continuous-time base walk using few single-element exchanges.

\paragraph{Adversarial online and bandit submodular maximization.}
Early work developed online greedy and assignment algorithms for monotone submodular rewards \citep{streeter2008,streeter2009}. General offline-to-online frameworks transform robust greedy procedures through Blackwell approachability and its bandit analogue \citep{niazadeh2023}. Continuous DR-submodular methods obtain bandit guarantees through zeroth-order gradient estimation; their discrete responsive-feedback reduction may evaluate sets outside the original feasible family \citep{zhang2019}. For adversarial rewards under strict feasible feedback, \citet{wan2023bandit} obtain $(1-1/e)$-regret for partition matroids by exploiting a product-form feasible extension. \citet{sisalem2024} handle general matroids for the restricted class of weighted threshold-potential objectives through a concave relaxation, including a bandit variant. The present work treats arbitrary monotone submodular objectives and arbitrary matroids without a product decomposition or a concave-relaxation assumption.

\paragraph{Stochastic submodular bandits.}
Several stochastic models exploit additional structure or richer observations. Linear submodular bandits assume that the unknown utility is a linear combination of known submodular components \citep{yue2011}, while interactive submodular bandits impose an RKHS model and observe noisy marginal utilities \citep{chen2017interactive}. Closer to our feedback protocol is the stochastic full-bandit model, in which only the noisy total reward of the selected set is observed. Under a cardinality constraint, \citet{nie2022} give an explore-then-commit algorithm for monotone rewards, and \citet{fourati2024stochasticgreedy} improve the dependence on the cardinality bound through stochastic-greedy exploration; \citet{tajdini2024} subsequently derive the first minimax lower bounds together with a nearly matching upper bound for a restricted algorithm class. \citet{nie2023framework} give a black-box reduction from robust offline approximation algorithms to stochastic full-bandit algorithms, with applications to monotone submodular maximization under cardinality and knapsack constraints. For general matroids, specializing the monotone $k_{\mathrm{sub}}$-submodular result of \citet{nie2025ksubmodular} to $k_{\mathrm{sub}}=1$ yields ordinary set-submodular maximization with approximation factor $1/2$ and $\widetilde O(n^{1/3}kT^{2/3})$ expected $1/2$-regret. For unconstrained nonmonotone objectives that are submodular in expectation, \citet{fourati2023} obtain a stochastic full-bandit guarantee. These results rely on a fixed mean reward function or other stochastic structure, whereas our reward sequence may be chosen adversarially.

\paragraph{Bandit submodular minimization.}
For unconstrained submodular minimization, the convex Lov\'asz extension permits a substantially different reduction to online convex optimization. \citet{hazan2012} give efficient full-information and bandit algorithms, including an $O(nT^{2/3})$ adversarial bandit regret bound. \citet{cardoso2019} develop differentially private variants under both feedback models, and \citet{ito2022} obtain best-of-both-worlds guarantees with gap-dependent logarithmic regret in stochastic environments and robustness to adversarial corruptions. These convex-extension techniques do not directly apply to maximizing a submodular reward under a matroid constraint, where the objective is nonconcave and the relevant benchmark is approximate.


\subsection{Paper Organization}

\Cref{sec:prelim} introduces the model and notation.  \Cref{sec:poisson} extends the Poisson-walk algorithm of \citet{ganz2026poisson} to general exchange distributions and quantifies the resulting terminal approximation error.  \Cref{sec:balanced} establishes balanced fractional exchanges.  \Cref{sec:algorithm} presents the bandit algorithm and its regret analysis.  Supporting arguments appear in the appendices.

\section{Preliminaries}
\label{sec:prelim}

\paragraph{Submodular functions.}
For a finite ground set $\U$ and a set $S\subseteq\U$, let $\1_S\in\{0,1\}^{\U}$ denote the incidence vector of $S$. For an element $u\in\U$, write $S+u\deq S\cup\{u\}$ and $S-u\deq S\setminus\{u\}$. A function $f:2^{\U}\to[0,1]$ is \emph{normalized} if $f(\varnothing)=0$, \emph{monotone} if for any $A\subseteq B$ we have $f(A)\le f(B)$. We call a set function is \emph{submodular} if it satisfies following property for any $A\subseteq B,\ u\notin B$:
\[
f(A+u)-f(A)
\ge
f(B+u)-f(B).
\]

\paragraph{Matroids.}
Let $\M=(\U,\I)$ be a matroid of rank $k\ge1$. The family of bases is denoted by $\B$. For a base $A$, define its feasible single-exchange set
\begin{equation}
\cE(A)\deq\cset{(i,j): i\in A,\ j\in\U,\ A-i+j\in\B}.
\label{eq:exchange-set}
\end{equation}
Self-loops $(i,i)$ are included. If $j\in A$ and $(i,j)\in\cE(A)$, then necessarily $i=j$.

The base polytope is
\begin{equation}
P(\M)\deq\operatorname{conv}\cset{\1_B:B\in\B}.
\label{eq:base-polytope}
\end{equation}
Every $x\in P(\M)$ satisfies $0\le x_u\le1$ and $\sum_{u\in\U}x_u=k$. We need Brualdi's strong basis-exchange theorem \citep{brualdi1969} in our analysis.
\begin{lemma}[\citep{brualdi1969}]
\label{lem:brualdi}
For every pair of bases $A,B\in\B$, there is a bijection $h_{A,B}:A\to B$ such that
\[
A-i+h_{A,B}(i)\in\B
\qquad\text{for every }i\in A.
\]
The bijection may be chosen so that $h_{A,B}(i)=i$ for all $i\in A\cap B$.
\end{lemma}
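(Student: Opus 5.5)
We would prove \Cref{lem:brualdi} by realizing $h_{A,B}$ as a perfect matching in a bipartite exchange graph and checking Hall's condition with a rank argument. Let $G$ be the bipartite graph with left side $A$, right side $B$, and an edge $\{i,j\}$ whenever $A-i+j\in\B$, i.e.\ whenever $(i,j)\in\cE(A)$ and $j\in B$. Since $|A|=|B|=k$, a perfect matching of $G$ is exactly a bijection $h_{A,B}:A\to B$ with the required property. So the whole task is to verify Hall's condition $|N(X)|\ge|X|$ for every $X\subseteq A$, where $N(X)\subseteq B$ is the neighbourhood of $X$.

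First we describe the edges through fundamental circuits. If $j\in B\setminus A$, then $A+j$ contains a unique circuit $C(A,j)\ni j$. Moreover, $A-i+j\in\B$ holds iff $i\in C(A,j)$. If $j\in A\cap B$, then $A-i+j$ has $k$ elements only when $i=j$. Hence the unique neighbour of $j$ is $j$ itself, which matches the remark after \eqref{eq:exchange-set}.

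Now fix $X\subseteq A$ and suppose, for contradiction, that $|N(X)|<|X|$. Take any $j\in B\setminus N(X)$. If $j\in A$, then $j\notin X$, since otherwise $j\in N(X)$; thus $j\in A\setminus X$. If $j\notin A$, then $C(A,j)-j\subseteq A\setminus X$, so $j\in\operatorname{cl}(A\setminus X)$. In both cases $B\setminus N(X)\subseteq\operatorname{cl}(A\setminus X)$. Since $B\setminus N(X)$ is independent, this gives
\[
k-|N(X)|=|B\setminus N(X)|\le r(A\setminus X)=k-|X|.
\]
That is, $|N(X)|\ge|X|$, a contradiction. Hall's theorem then yields a perfect matching, and hence the bijection $h_{A,B}$.

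The normalization $h_{A,B}(i)=i$ on $A\cap B$ comes for free. Each $j\in A\cap B$ has the single neighbour $j$ in $G$, so every perfect matching must match $j$ to $j$. The only step needing care is the closure inclusion $B\setminus N(X)\subseteq\operatorname{cl}(A\setminus X)$. This rests on the standard fact that $A-i+j$ is a base exactly when $i$ lies on the fundamental circuit of $j$. We would state that fact explicitly: if $i\notin C(A,j)$, then $C(A,j)\subseteq A-i+j$, so this set is dependent; conversely, if $i\in C(A,j)$, then $A-i+j$ contains no circuit, because $C(A,j)$ is the only circuit of $A+j$.
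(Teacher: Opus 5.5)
Your proof is correct, and it takes a different route from the paper only because the paper gives no proof: it imports the lemma as a black box from \citet{brualdi1969}.

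Your argument is the standard self-contained proof. You encode the bijection as a perfect matching in the exchange graph between $A$ and $B$, and describe the edges through fundamental circuits. You then verify Hall's condition via the inclusion $B\setminus N(X)\subseteq\operatorname{cl}(A\setminus X)$ together with $r(A\setminus X)=k-|X|$. Each step checks out: the case split $j\in A\cap B$ versus $j\in B\setminus A$, the circuit characterization of when $A-i+j$ is a base, and the rank comparison. The ``suppose for contradiction'' framing is unnecessary, since your computation proves $|N(X)|\ge|X|$ directly.

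Compared with the citation, your route buys two things.
\begin{enumerate}
\item It shows that the normalization $h_{A,B}(i)=i$ on $A\cap B$ is not merely an allowed choice but is forced for every valid bijection. This agrees with the remark after \eqref{eq:exchange-set}.
\item It is constructive. The exchange graph needs at most $k^2$ independence-oracle calls, and a bipartite perfect matching then yields $h_{A,B}$ in polynomial time. This is what is implicitly assumed where the paper says ``compute a Brualdi bijection'' in \Cref{rem:decomposition-sampling}.
\end{enumerate}
The citation buys brevity, and nothing essential is lost by it. The main algorithm uses the lemma only existentially, in the proof of \Cref{thm:balanced-existence}, and computes $Q_A^x$ by a transportation flow rather than by explicit Brualdi bijections.
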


\paragraph{Bandit protocol.}
An oblivious adversary fixes a sequence $f_1,\ldots,f_T:2^{\U}\to[0,1]$ of normalized, monotone submodular functions before the interaction. In each round $t=1,\ldots,T$, the learner chooses $S_t\in\I$ without observing $f_t$, then observes only the scalar $f_t(S_t)$ and receives this value. Because every independent set extends to a base and every $f_t$ is monotone,
\[
\max_{S\in\I}\sum_{t=1}^T f_t(S)
=
\max_{O\in\B}\sum_{t=1}^T f_t(O).
\]
Accordingly, for $\alpha\in[0,1]$, the expected $\alpha$-regret is
\[
R_\alpha(T)
\deq
\alpha\max_{O\in\B}\sum_{t=1}^T f_t(O)
-\E\left[\sum_{t=1}^T f_t(S_t)\right].
\]

\paragraph{Multilinear extension.}

Fix a monotone submodular function $f:2^{\U}\to[0,1]$. Its multilinear extension is
\begin{equation}
F(x)\deq\E_{R\sim x}[f(R)],
\label{eq:multilinear}
\end{equation}
where every element $u$ is included independently with probability $x_u$. For $S\subseteq\U$ and $\tau\in[0,1]$, the notation $R\sim\tau S$ means that every element of $S$ is included independently with probability $\tau$ and no element outside $S$ is included.

The following two standard properties of the multilinear extension are basic ingredients in the analysis of continuous greedy \citep{vondrak2008,calinescu2011}. We include their short proofs for completeness.

\begin{lemma}
\label{lem:multilinear-marginal}
For every $u\in\U$,
\begin{equation}
\partial_uF(x)
=
\E_{R\sim x|_{\U-u}}[f(R+u)-f(R)].
\label{eq:derivative}
\end{equation}
The derivative is independent of $x_u$ and is nonincreasing in every other coordinate. Consequently, if $x\le y$ coordinatewise, then
\[
\partial_uF(x)\ge\partial_uF(y).
\]
\end{lemma}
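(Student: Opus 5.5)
The plan is to expand $F$ as a polynomial that is affine in each coordinate separately, and then read off the derivative. Fix $u\in\U$ and condition on which elements of $\U-u$ appear. Since $R\sim x$ includes $u$ independently with probability $x_u$, we have
\[
F(x)=x_u\,\E_{R\sim x|_{\U-u}}[f(R+u)]+(1-x_u)\,\E_{R\sim x|_{\U-u}}[f(R)].
\]
Here $R$ is a random subset of $\U-u$. Neither expectation depends on $x_u$, so $F$ is affine in $x_u$. Differentiating in $x_u$ gives exactly \eqref{eq:derivative}. The right-hand side involves only $x|_{\U-u}$, so $\partial_uF$ is independent of $x_u$.

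For monotonicity in another coordinate $v\neq u$, I apply the same decomposition to the expression \eqref{eq:derivative}, now with respect to $v$. Write $R=R'\cup(\{v\}\cap R)$, where $R'\sim x|_{\U-u-v}$. Then
\[
\partial_uF(x)=x_v\,\E_{R'}\bigl[f(R'+v+u)-f(R'+v)\bigr]+(1-x_v)\,\E_{R'}\bigl[f(R'+u)-f(R')\bigr].
\]
This is affine in $x_v$, with slope
\[
\E_{R'}\bigl[(f(R'+v+u)-f(R'+v))-(f(R'+u)-f(R'))\bigr].
\]
Submodularity, applied with $A=R'$, $B=R'+v$ and $u\notin B$, makes the integrand pointwise nonpositive. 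Hence the slope is at most $0$, and $\partial_uF$ is nonincreasing in $x_v$.

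For the consequence, let $x\le y$ coordinatewise. I move from $x$ to $y$ one coordinate at a time, raising $x_v$ to $y_v$ for each $v$ in turn. Each step either changes coordinate $u$, which leaves $\partial_uF$ unchanged, or raises some other coordinate, which does not increase $\partial_uF$. Chaining these steps gives $\partial_uF(x)\ge\partial_uF(y)$.

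None of these steps is a real obstacle. The only care needed is to make sure $u\notin B=R'+v$ when invoking submodularity, which holds because $R'\subseteq\U-u-v$, and to keep track of the conditioning so that the affine decomposition is exact.
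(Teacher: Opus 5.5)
Your proof is correct, and the derivative formula and independence from $x_u$ are obtained exactly as in the paper, by conditioning on whether $u$ is included. The monotonicity part takes a different route.

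The paper couples $R_x\sim x|_{\U-u}$ and $R_y\sim y|_{\U-u}$ through common uniform random variables, so that $R_x\subseteq R_y$ almost surely. It then applies submodularity pathwise to this nested pair. This compares $\partial_uF$ at two vectors that may differ in every coordinate other than $u$, all in one step. Per-coordinate monotonicity is a special case, and independence from $x_u$ extends the comparison to arbitrary $x\le y$.

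You instead condition a second time, on $v$, and show that $\partial_uF$ is affine in $x_v$ with slope $\E_{R'}[(f(R'+v+u)-f(R'+v))-(f(R'+u)-f(R'))]$. In effect you compute the mixed partial $\partial_v\partial_uF$ and show it is nonpositive. You then telescope along a coordinatewise monotone path from $x$ to $y$.

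Your version invokes submodularity only for pairs $A\subseteq A+v$. It also makes the second-order sign condition on $F$ explicit, at the cost of the extra chaining step. The paper's coupling needs no path and handles all coordinates in a single comparison, but uses submodularity for nested sets that may differ in many elements. Both arguments cover every later use of the lemma in the paper.
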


\begin{proof}
Because $F$ is multilinear, conditioning on all coordinates except $u$ gives
\[
F(x)=x_u\E[f(R+u)]+(1-x_u)\E[f(R)],
\]
where $R\sim x|_{\U-u}$. Differentiating in $x_u$ proves \eqref{eq:derivative} and shows independence from $x_u$.

Now let $x$ and $y$ agree on coordinate $u$ and satisfy $x_v\le y_v$ for every $v\ne u$. Couple $R_x\sim x|_{\U-u}$ and $R_y\sim y|_{\U-u}$ by using common uniform random variables, so that $R_x\subseteq R_y$ almost surely. Submodularity gives
\[
f(R_x+u)-f(R_x)
\ge
f(R_y+u)-f(R_y).
\]
Taking expectations proves monotonicity in every coordinate other than $u$. Independence from $x_u$ completes the proof for arbitrary $x\le y$.
\end{proof}

\begin{lemma}
\label{lem:multilinear-gap}
For every $x\in[0,1]^{\U}$ and every set $O\subseteq\U$,
\begin{equation}
\sum_{o\in O}\partial_oF(x)
\ge
f(O)-F(x).
\label{eq:multilinear-gap}
\end{equation}
\end{lemma}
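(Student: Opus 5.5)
We prove the inequality by coupling $R\sim x$ with the set $O$. We use monotonicity and submodularity of $f$ pointwise, and then identify the resulting expected marginals with partial derivatives via \Cref{lem:multilinear-marginal}.

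\textbf{Step 1 (pointwise bound).} Fix a realization $R\subseteq\U$ and enumerate $O\setminus R=\{o_1,\ldots,o_m\}$. Monotonicity gives $f(O)\le f(R\cup O)$. Telescoping,
\[
f(R\cup O)-f(R)=\sum_{\ell=1}^m\bigl[f(R\cup\{o_1,\ldots,o_\ell\})-f(R\cup\{o_1,\ldots,o_{\ell-1}\})\bigr]
\le\sum_{\ell=1}^m\bigl[f(R+o_\ell)-f(R)\bigr],
\]
where each term is bounded using submodularity with $A=R\subseteq B=R\cup\{o_1,\ldots,o_{\ell-1}\}$ and $u=o_\ell\notin B$. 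For $o\in O\cap R$ we have $f(R+o)-f(R)=0$, so the sum may be extended to all of $O$:
\[
f(O)-f(R)\le\sum_{o\in O}\bigl[f(R+o)-f(R)\bigr].
\]

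\textbf{Step 2 (expectation).} Take $R\sim x$ and average. The left side becomes $f(O)-F(x)$. For each $o\in O$, split on whether $o\in R$. With probability $x_o$ the marginal is zero. With probability $1-x_o$, the set $R\setminus\{o\}$ is distributed as $x|_{\U-o}$ and is independent of this event. Hence
\[
\E_{R\sim x}\bigl[f(R+o)-f(R)\bigr]=(1-x_o)\,\partial_oF(x)
\]
by \eqref{eq:derivative}.

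\textbf{Step 3 (conclusion).} By monotonicity of $f$, each $\partial_oF(x)\ge0$, and $1-x_o\le1$. Therefore
\[
\sum_{o\in O}(1-x_o)\,\partial_oF(x)\le\sum_{o\in O}\partial_oF(x),
\]
which yields \eqref{eq:multilinear-gap}.

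The only delicate point is Step 2: one must correctly condition on $o\notin R$ and use the independence of the coordinates so that the remaining random set has law $x|_{\U-o}$. Everything else is routine.
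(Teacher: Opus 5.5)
Your proof is correct. It reaches the same intermediate bound as the paper, $f(O)-F(x)\le\sum_{o\in O}(1-x_o)\,\partial_oF(x)$, before discarding the factors $1-x_o$, but it gets there by a different mechanism.

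The paper stays in the continuous domain. It uses monotonicity of $F$ to get $F(x\vee\1_O)\ge f(O)$. It then raises the coordinates in $O$ one at a time from $x_o$ to $1$, and bounds each increment by $(1-x_o)\,\partial_oF(x)$ using the diminishing-partials part of \Cref{lem:multilinear-marginal}.

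You instead apply monotonicity and submodularity of $f$ pointwise to each realization $R$, and average only afterward. As a result you need only the derivative formula \eqref{eq:derivative} together with independence of the coordinates. You never use the monotonicity of $\partial_oF$ in the other coordinates, or the monotonicity of $F$ itself.

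What each route buys: yours is the more elementary random-set argument, working directly with the discrete definition of submodularity. The paper's avoids the conditioning-on-$o\notin R$ step that you flag as delicate, and reuses the structural properties of $F$ it has already established.
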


\begin{proof}
Let $x\vee\1_O$ denote the coordinatewise maximum. Monotonicity gives
$F(x\vee\1_O)\ge F(\1_O)=f(O)$. Increase the coordinates in $O$ one at a time from their values in $x$ to one. By \Cref{lem:multilinear-marginal}, every marginal encountered along this monotone path is at most its value at $x$. Therefore
\[
F(x\vee\1_O)-F(x)
\le
\sum_{o\in O}(1-x_o)\partial_oF(x)
\le
\sum_{o\in O}\partial_oF(x).
\]
Combining the two displays proves the claim.
\end{proof}

\section{Poisson Base Walk with General Exchange Distribution}
\label{sec:poisson}

The Poisson-process algorithm of \citet{ganz2026poisson} is an important component of our bandit algorithm. We formulate the base walk with a general distribution over feasible exchanges, which is the form used throughout this paper, and then recover their offline algorithm as a particular exchange rule. For simplicity, we call the algorithm of \citet{ganz2026poisson} the GKSS algorithm, after its authors.
The distinction lies in the exchange rule: the GKSS algorithm can choose an exchange using the entire objective $f$, whereas the bandit learner must choose one before observing $f_t$.

\paragraph{Poisson base walk.}
Fix a monotone submodular function $f$ with multilinear extension $F$, a rank-$k$ matroid $\M$, a starting time $\eps\in(0,1)$, and an arbitrary starting base $A_0$. The walk runs a nonhomogeneous Poisson process on $[\eps,1]$ with intensity $\lambda(s)\deq k/s$. For every time $s$ and base $A$, let $p_s(\cdot\mid A)$ be a measurable distribution supported on the feasible exchanges $\cE(A)$. We call each arrival of this Poisson process an \emph{event}: event $\event$ occurs at time $\tau_{\event}$, samples one feasible exchange $e_{\event}=(i_{\event},j_{\event})$, and updates the base from $A_{\event-1}$ to $A_{\event}$. Thus an event is an internal jump of the simulated base trajectory, not a value query.

\begin{algorithm}[t]
\small
\caption{Poisson Base Walk with General Exchange Distributions}
\label{alg:poisson-base-walk}
\KwIn{Matroid $\M$; starting base $A_0$; starting time $\eps$; exchange distributions $p_s(\cdot\mid A)$ supported on $\cE(A)$.}
Sample the event times $\eps<\tau_1<\cdots<\tau_M\le1$ of a nonhomogeneous Poisson process with intensity $\lambda(s)=k/s$.\;
\For{$\event=1,\ldots,M$}{
 Draw $e_{\event}=(i_{\event},j_{\event})\sim p_{\tau_{\event}}(\cdot\mid A_{\event-1})$.\;
 Set $A_{\event}=A_{\event-1}-i_{\event}+j_{\event}$.\;
}
\Return $A_M$.\;
\end{algorithm}

Let $A(s-)$ denote the base immediately before time $s$, and let $A(s)$ denote the right-continuous post-event state. In particular, $A(\tau_{\event}-)=A_{\event-1}$ and $A(\tau_{\event})=A_{\event}$. Every state in \Cref{alg:poisson-base-walk} is a base, so the process requires no rounding. Moreover, the number $M$ of exchange events satisfies
\begin{equation}
M\sim\operatorname{Poisson}(\mu),
\qquad
\mu\deq\int_\eps^1\frac{k}{s}\,ds
=
k\log(1/\eps),
\label{eq:poisson-mean}
\end{equation}
and hence the expected number of exchanges is $k\log(1/\eps)$.  Conditional on $M$, the event times can be sampled directly as described in \Cref{app:poisson-sampling}.

The GKSS algorithm is the objective-dependent specialization of \Cref{alg:poisson-base-walk} obtained as follows. At an event time $s$ with current base $A$, it computes a maximum-weight base
\begin{equation}
B_s(A)
\in
\operatorname*{arg\,max}_{B\in\B}
\sum_{j\in B}\partial_jF(s\1_A).
\label{eq:gkss-best-base}
\end{equation}
By Brualdi's basis-exchange theorem, there is a bijection $h_{A,B_s(A)}:A\to B_s(A)$ such that $A-i+h_{A,B_s(A)}(i)$ is a base for every $i\in A$. The GKSS exchange distribution chooses $i$ uniformly from $A$ and sets $j=h_{A,B_s(A)}(i)$. Conditional on $A$ and $s$, this exchange satisfies, for every comparator base $O$,
\begin{align}
&\E\left[
\partial_jF(s\1_A)-\partial_iF(s\1_A)
\,\middle|\,
A,s
\right]
\notag=
\frac1k\left(
\sum_{j\in B_s(A)}\partial_jF(s\1_A)
-
\sum_{i\in A}\partial_iF(s\1_A)
\right)
\notag\\
&\quad\ge
\frac1k\left(
\sum_{o\in O}\partial_oF(s\1_A)
-
\sum_{i\in A}\partial_iF(s\1_A)
\right).
\label{eq:gkss-above-average}
\end{align}
The equality follows because both the deletion element and its image under the Brualdi bijection are uniform on their respective bases; the inequality follows from the definition of $B_s(A)$. The GKSS rule satisfies this condition at every event time, which is crucial for achieving the $1-1/e$ approximation factor.

The bandit algorithm in \Cref{sec:algorithm} instantiates \Cref{alg:poisson-base-walk} with a different exchange distribution. Because that distribution is chosen without observing the current reward function, it cannot in general satisfy \eqref{eq:gkss-above-average} at every event. The remainder of this section bounds the terminal value when the expected exchange gain falls below the right-hand side of \eqref{eq:gkss-above-average}.


\paragraph{Leave-One-Out exchange gain.}
For a feasible exchange $(i,j)\in\cE(A)$, define the \emph{leave-one-out exchange gain} for a given submodular function $f$ and its multilinear extension $F$, a time $\tau\in [0,1]$ as
\begin{equation}
G_f(A,\tau;i,j)
\deq
\E_{R\sim\tau(A-i)}[f(R+j)-f(R+i)].
\label{eq:loo-gain}
\end{equation}
The two sets in this difference are feasible: $R+i\subseteq A$ and $R+j\subseteq A-i+j$.

We have following two properties of $G_f$ that are used in the analysis of the Poisson process algorithm.

\begin{lemma}
\label{lem:exact-jump}
For every $(i,j)\in\cE(A)$,
\begin{equation}
F(\tau\1_{A-i+j})-F(\tau\1_A)
=
\tau G_f(A,\tau;i,j).
\label{eq:exact-jump}
\end{equation}
\end{lemma}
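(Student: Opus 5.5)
We prove \eqref{eq:exact-jump} by writing both multilinear values as expectations over one shared random set. The key observation is that $\tau\1_A$ and $\tau\1_{A-i+j}$ agree on every coordinate of $A-i$. They differ only in that the first puts mass $\tau$ on $i$ and none on $j$, while the second puts mass $\tau$ on $j$ and none on $i$.

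First dispose of the self-loop case $i=j$. Here both sides vanish, since $A-i+j=A$ and $f(R+j)-f(R+i)=0$. Otherwise $i\neq j$, and by the remark after \eqref{eq:exchange-set} we have $j\notin A$. Let $R\sim\tau(A-i)$. Under $\tau\1_A$, the random set equals $R$ together with $i$ included independently with probability $\tau$. Conditioning on whether $i$ is included gives
\[
F(\tau\1_A)=\tau\,\E[f(R+i)]+(1-\tau)\,\E[f(R)].
\]
In the same way, under $\tau\1_{A-i+j}$ the random set is $R$ together with $j$ included independently with probability $\tau$, because $j\notin A-i$ and the coordinates of $A-i$ are unchanged. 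Hence
\[
F(\tau\1_{A-i+j})=\tau\,\E[f(R+j)]+(1-\tau)\,\E[f(R)].
\]

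Subtracting the two displays cancels the $(1-\tau)\E[f(R)]$ terms. What remains is $\tau\,\E_{R\sim\tau(A-i)}[f(R+j)-f(R+i)]$, which equals $\tau G_f(A,\tau;i,j)$ by \eqref{eq:loo-gain}.

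No step is a real obstacle. The only care needed is the bookkeeping that $j\notin A$ when $i\neq j$, so that the coordinate of $j$ really goes from $0$ to $\tau$, and the separate handling of the degenerate case $i=j$. This is the same conditioning-on-one-coordinate argument used in the proof of \Cref{lem:multilinear-marginal}.
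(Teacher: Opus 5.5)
Your proof is correct and is essentially the paper's argument: the paper inserts the intermediate vector $\tau\1_{A-i}$ and applies multilinearity in coordinates $j$ and $i$, which is exactly your conditioning on the single differing coordinate with the shared term $(1-\tau)\E[f(R)]=(1-\tau)F(\tau\1_{A-i})$ cancelling. Your separate treatment of $i=j$ and your check that $j\notin A$ when $i\neq j$ are sound; the paper's derivative formulation handles the self-loop case without splitting it off.
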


\begin{proof}
Both vectors in \eqref{eq:exact-jump} agree outside coordinates $i$ and $j$. Insert the intermediate vector $\tau\1_{A-i}$. Multilinearity in coordinate $j$ gives
\[
F(\tau\1_{A-i+j})-F(\tau\1_{A-i})
=
\tau\partial_jF(\tau\1_{A-i}),
\]
while multilinearity in coordinate $i$ gives
\[
F(\tau\1_A)-F(\tau\1_{A-i})
=
\tau\partial_iF(\tau\1_{A-i}).
\]
Subtracting the second identity from the first and using \eqref{eq:loo-gain} proves the claim.
\end{proof}

\begin{lemma}
\label{lem:loo-gradient-dominance}

For every base $A$, time $\tau\in[0,1]$, and feasible exchange $(i,j)\in\cE(A)$,
\begin{equation}
G_f(A,\tau;i,j)
\ge
\partial_jF(\tau\1_A)-\partial_iF(\tau\1_A).
\label{eq:G-gradient-dominance}
\end{equation}
\end{lemma}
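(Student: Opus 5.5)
We need $G_f(A,\tau;i,j)\ge \partial_jF(\tau\1_A)-\partial_iF(\tau\1_A)$. The plan is to rewrite both terms of $G_f$ as partial derivatives of $F$ at the point $\tau\1_{A-i}$ and then compare them with the partials at $\tau\1_A$ one term at a time. The proof of \Cref{lem:exact-jump} already gives
\[
G_f(A,\tau;i,j)=\partial_jF(\tau\1_{A-i})-\partial_iF(\tau\1_{A-i}).
\]
To see this, apply \Cref{lem:multilinear-marginal} at $x=\tau\1_{A-i}$. For $u=j$, the restriction $x|_{\U-j}$ samples $R\sim\tau(A-i)$ whether or not $j\in A$: if $j\in A$ with $(i,j)\in\cE(A)$ then $j=i$, and coordinate $i$ is $0$ anyway. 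Hence $\partial_jF(\tau\1_{A-i})=\E[f(R+j)-f(R)]$, and likewise $\partial_iF(\tau\1_{A-i})=\E[f(R+i)-f(R)]$. Subtracting the two gives the displayed identity.

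\emph{Insertion term.} Since $\tau\1_{A-i}\le\tau\1_A$ coordinatewise, the monotonicity part of \Cref{lem:multilinear-marginal} gives $\partial_jF(\tau\1_{A-i})\ge\partial_jF(\tau\1_A)$.

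\emph{Deletion term.} This is where one might expect the inequality to go the wrong way, but \Cref{lem:multilinear-marginal} says $\partial_iF(x)$ does not depend on $x_i$. The vectors $\tau\1_{A-i}$ and $\tau\1_A$ differ only in coordinate $i$, so $\partial_iF(\tau\1_{A-i})=\partial_iF(\tau\1_A)$ exactly.

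Combining the two terms,
\[
G_f(A,\tau;i,j)=\partial_jF(\tau\1_{A-i})-\partial_iF(\tau\1_A)\ge\partial_jF(\tau\1_A)-\partial_iF(\tau\1_A).
\]
The only delicate point is the self-loop case $j=i$. There both sides are $0$: $G_f$ vanishes by definition and the right-hand side is trivially zero, so the inequality holds. Otherwise $j\notin A$, and the argument above applies without change.
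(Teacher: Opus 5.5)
Your proof is correct and matches the paper's argument: you write $G_f(A,\tau;i,j)=\partial_jF(\tau\1_{A-i})-\partial_iF(\tau\1_{A-i})$ via \Cref{lem:multilinear-marginal}, then handle the insertion term by coordinatewise monotonicity and the deletion term by the independence of $\partial_iF$ from $x_i$. Your separate treatment of the self-loop $j=i$ is harmless but not needed, since the general argument already covers that case (with equality).
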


\begin{proof}
The definition of $G_f$ and \Cref{lem:multilinear-marginal} give, for every feasible exchange $(i,j)$,
\begin{align*}
G_f(A,\tau;i,j)
&=
\partial_jF(\tau\1_{A-i})
-
\partial_iF(\tau\1_{A-i})\\
&\ge
\partial_jF(\tau\1_A)
-
\partial_iF(\tau\1_A),
\end{align*}
where the derivative in coordinate $i$ is unchanged when coordinate $i$ itself is varied.
\end{proof}



We first give the event-sum identity needed below. Its proof is included because the discount factor in the regret argument comes directly from the cancellation between the intensity $k/s$ and a factor $s$ in the event statistic.

\begin{lemma}
\label{lem:event-sum}
Let $H$ be any bounded event statistic, where $H(A,s,e)$ assigns a real value to a base $A$, a time $s\in[\eps,1]$, and a feasible exchange $e\in\cE(A)$.
Assume that $H(A,\cdot,e)$ is measurable and continuous except at finitely many points. Then
\begin{align}
\E\left[
\sum_{\event=1}^{M}
H(A(\tau_{\event}-),\tau_{\event},e_{\event})
\right]
=
\int_\eps^1\frac{k}{s}
\E\left[
\sum_{e\in\cE(A(s-))}
 p_s(e\mid A(s-))H(A(s-),s,e)
\right]ds,
\label{eq:event-sum}
\end{align}
where $A(s-)$ is the base held immediately before time $s$, so $A(\tau_{\event}-)$ is the state immediately before the $\event$th exchange.
\end{lemma}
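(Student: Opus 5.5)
We will prove \eqref{eq:event-sum} with the standard compensator (Lévy system) argument for marked point processes, specialized to the finite-activity process of \Cref{alg:poisson-base-walk}.

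\emph{Description of the process.} Since the intensity $k/s$ is bounded on $[\eps,1]$, the walk has almost surely finitely many events, with $M\sim\operatorname{Poisson}(\mu)$ and $\E[M]=\mu<\infty$. We regard it as a marked point process whose marks are the exchanges $e_\event$. Let $\cF_s$ be the natural filtration generated by the event times and marks up to time $s$. At each event time $\tau_\event$, conditional on $\cF_{\tau_\event-}$ and on $\tau_\event$, the mark $e_\event$ is drawn from $p_{\tau_\event}(\cdot\mid A(\tau_\event-))$. The event times themselves form a Poisson process with intensity $k/s$ that is independent of the marks.

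\emph{Compensator of the counting measure.} Let $N(ds,de)$ be the random counting measure of events. Its predictable compensator is $\frac{k}{s}\,p_s(de\mid A(s-))\,ds$. One can verify this directly from the construction. The Poisson clock has compensator $\frac{k}{s}ds$ with respect to $\cF$, because it is independent of past marks and has independent increments. Thinning by a mark kernel that is predictable given $A(s-)$ then multiplies the compensator by that kernel.

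\emph{Applying the compensator identity.} Define $\Phi(s,e)\deq H(A(s-),s,e)$. This integrand is predictable: $A(s-)$ is left-continuous, hence predictable, and $H(A,\cdot,e)$ is measurable. The left side of \eqref{eq:event-sum} equals $\E\int\!\!\int\Phi\,dN$. The compensator identity $\E\int\Phi\,dN=\E\int\Phi\,d\nu$ holds for nonnegative predictable $\Phi$. We apply it separately to the positive part $H^+$ and the negative part $H^-$. Each side is finite, bounded by $\|H\|_\infty\E[M]=\|H\|_\infty\mu$, so subtracting the two identities is legitimate. The remaining step is Fubini: exchanging expectation and the $ds$ integral is justified because the integrand is bounded and the measure $\frac{k}{s}ds$ is finite on $[\eps,1]$. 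This produces the right-hand side of \eqref{eq:event-sum}.

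\emph{Elementary alternative, avoiding stochastic calculus.} Condition on $M=m$. The event times are then the order statistics of $m$ i.i.d.\ draws with density $\frac{k/s}{\mu}$ on $[\eps,1]$. We can write $\E[H(A_{\event-1},\tau_\event,e_\event)]$ by integrating over $\tau_\event=s$ and then summing over $\event$. The density that the $\event$th event occurs at $s$, times the law of the pre-event history, sums over $\event$ and $m$ to exactly $\frac{k}{s}$ times the law of $A(s-)$. This uses the memorylessness of the Poisson process: given that an event occurs at $s$, the past on $[\eps,s)$ is distributed as the unconditioned process. The mark is then integrated against $p_s(\cdot\mid A(s-))$.

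\emph{Expected obstacle.} The main technical care is in justifying that conditioning step, namely the Palm/Mecke-type identity stating that the pre-$s$ state given an event at $s$ has the unconditioned law of $A(s-)$. Measurability is also a concern: the integrand must be integrable in $s$ with the states taken left-continuously. The piecewise-continuity assumption on $H(A,\cdot,e)$ supplies exactly this.
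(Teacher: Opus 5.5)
Your proof is correct, but it follows a different route from the paper's.

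The paper proves the identity from scratch by discretization. It partitions $[\eps,1]$ into cells of mesh $\Delta$ and bounds the total contribution of cells with two or more arrivals by $O(\Delta)$, using $\lambda\le k/\eps$ and boundedness of $H$. It evaluates each single-arrival cell through the density $e^{-\Lambda_r}\lambda(s)$ of the unique arrival, with the mark kernel taken at the left-endpoint state. It then replaces that state by $A(s-)$ at a further $O(\Delta)$ cost and lets $\Delta\to0$. In effect, it derives by hand the special case of the compensator identity you quote.

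Your approach identifies the $\cF$-compensator $\frac{k}{s}p_s(de\mid A(s-))\,ds$ and applies the L\'evy-system formula to the predictable integrand $H(A(s-),s,e)$. This is shorter and more general: it needs only measurability in $s$ of $H(A,\cdot,e)$ and $p_s(e\mid A)$, so the piecewise-continuity hypothesis plays no role. The cost is that it imports marked-point-process theory whose hypotheses must be checked, whereas the paper's argument is self-contained. Your Palm/order-statistics alternative is the Mecke formula and is equally valid.

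One sentence needs fixing. The event times are \emph{not} independent of the marks, since each mark is drawn from $p_{\tau}(\cdot\mid A(\tau-))$ and so depends on the arrival times. What the compensator computation actually uses, and what is true, is that the clock increments on $(s,1]$ are independent of $\cF_s$. This holds because the marks up to time $s$ are functions of the arrivals in $[\eps,s]$ and of sampling randomness independent of the clock.
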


\begin{proof}
Let $K$ bound $|H|$, and note that $\lambda(s)\le k/\eps$ on $[\eps,1]$. Consider a partition
$\eps=t_0<t_1<\cdots<t_m=1$ with mesh $\Delta$, and a cell $I_r=(t_{r-1},t_r]$. Conditional on the history at $t_{r-1}$, the number of underlying Poisson events in $I_r$ has mean
\[
\Lambda_r=\int_{t_{r-1}}^{t_r}\lambda(s)\,ds=O(\Delta).
\]
The probability of two or more events in this cell is
\[
1-e^{-\Lambda_r}(1+\Lambda_r)=O(\Delta^2),
\]
uniformly in $r$. Thus the total expected contribution from cells containing at least two events is at most
\[
K\sum_{r=1}^m\E[N(I_r)\ind\{N(I_r)\ge2\}]
=O(m\Delta^2)=O(\Delta),
\]
where the implicit constant depends only on $K,k,$ and $\eps$.

On a cell containing exactly one event at time $s$, the pre-event base equals the base at the left endpoint, and the conditional mark distribution is $p_s(\cdot\mid A(t_{r-1}))$. The standard density of the unique event in a nonhomogeneous Poisson process gives, conditionally on the left-endpoint history,
\begin{align*}
&\E\left[
\ind\{N(I_r)=1\}
H(A(s-),s,e)
\,
\middle|
\cF_{t_{r-1}}
\right]\\
&\quad=
\int_{I_r}
\exp\!\left(-\int_{t_{r-1}}^{t_r}\lambda(u)\,du\right)
\lambda(s)
\sum_e p_s(e\mid A(t_{r-1}))H(A(t_{r-1}),s,e)
\,ds.
\end{align*}
The exponential factor is $1+O(\Delta)$ uniformly. Replacing $A(t_{r-1})$ by the actual pre-event state inside the integral changes only paths with an earlier event in the same cell, whose total contribution is already $O(\Delta)$ after summing over cells. Therefore, summing over $r$ gives
\begin{align*}
\E\left[\sum_{\event=1}^M H(A(\tau_{\event}-),\tau_{\event},e_{\event})\right]
=
\sum_{r=1}^m\int_{I_r}\lambda(s)
\E\left[
\sum_e p_s(e\mid A(s-))H(A(s-),s,e)
\right]ds+O(\Delta).
\end{align*}
The integrand is bounded by $K\lambda(s)$ and is measurable. Letting the mesh tend to zero and applying dominated convergence proves \eqref{eq:event-sum}. A bounded measurable $H$ can be obtained by the usual simple-function approximation; the stated regularity already covers every function used in this paper.
\end{proof}

For a comparator base $O$, define the local benchmark
\begin{equation}
\zeta_{f,O}(A,s)
\deq
\frac1k\left(
\sum_{o\in O}\partial_oF(s\1_A)
-
\sum_{a\in A}\partial_aF(s\1_A)
\right),
\label{eq:zeta}
\end{equation}
and the discount weight
\begin{equation}
w(s)\deq s\exp(-(1-s)).
\label{eq:discount-weight}
\end{equation}

\begin{lemma}
\label{lem:poisson-generator}
Let
\begin{equation}
q(s)\deq\E[F(s\1_{A(s)})].
\label{eq:q-def}
\end{equation}
Then $q$ is absolutely continuous and, for almost every $s\in[\eps,1]$,
\begin{equation}
q'(s)
=
\E\left[
\sum_{a\in A(s-)}\partial_aF(s\1_{A(s-)})
+
k\sum_{e\in\cE(A(s-))}p_s(e\mid A(s-))G_f(A(s-),s;e)
\right].
\label{eq:poisson-generator}
\end{equation}
\end{lemma}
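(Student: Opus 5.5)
The function $q(s)=\E[F(s\1_{A(s)})]$ changes for two reasons: deterministic drift, because the scaling factor $s$ moves while the base stays fixed, and jumps, because events replace $A$ by $A-i+j$. I would write these two contributions separately along each sample path, take expectations, and convert the jump sum into an integral with \Cref{lem:event-sum}.

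\textbf{Pathwise decomposition.} Fix a realization with event times $\eps<\tau_1<\cdots<\tau_M\le1$. There are finitely many, since $M$ is Poisson. Between consecutive events the base $A$ is constant. On such an interval the map $u\mapsto F(u\1_A)$ is a polynomial, and the chain rule gives
\[
\frac{d}{du}F(u\1_A)=\sum_{a\in A}\partial_aF(u\1_A).
\]
At an event $\tau_\event$ with exchange $(i_\event,j_\event)$, \Cref{lem:exact-jump} gives the jump size
\[
F(\tau_\event\1_{A_\event})-F(\tau_\event\1_{A_{\event-1}})=\tau_\event\,G_f(A_{\event-1},\tau_\event;i_\event,j_\event).
\]
Hence for every $s\in[\eps,1]$,
\[
F(s\1_{A(s)})-F(\eps\1_{A_0})=\int_\eps^s\sum_{a\in A(u-)}\partial_aF(u\1_{A(u-)})\,du+\sum_{\event:\tau_\event\le s}\tau_\event\,G_f(A(\tau_\event-),\tau_\event;e_\event).
\]

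\textbf{Taking expectations.} All quantities involved are bounded, because $f\in[0,1]$ implies $|\partial_aF|\le1$ and $|G_f|\le1$. Fubini therefore lets me take the expectation of the drift integral inside. For the jump sum I apply \Cref{lem:event-sum} with the statistic
\[
H(A,u,e)=u\,G_f(A,u;e)\,\ind\{u\le s\}.
\]
This $H$ is bounded and polynomial in $u$ except for one jump at $u=s$, so it satisfies the lemma's regularity assumption. The intensity $k/u$ cancels the factor $u$, which gives
\[
q(s)=q(\eps)+\int_\eps^s\E\Bigl[\sum_{a\in A(u-)}\partial_aF(u\1_{A(u-)})+k\sum_{e\in\cE(A(u-))}p_u(e\mid A(u-))\,G_f(A(u-),u;e)\Bigr]du.
\]

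\textbf{Conclusion and main obstacle.} The integrand is bounded by $k+k=2k$ and is measurable in $u$. So $q$ is the indefinite integral of a bounded measurable function, hence absolutely continuous (indeed Lipschitz), and the Lebesgue differentiation theorem yields \eqref{eq:poisson-generator} for almost every $s$.

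The only delicate point is measurability of the integrand in $u$. This follows from joint measurability of $(u,\omega)\mapsto A(u-)$, which holds because the path is piecewise constant and left-limited, together with the assumed measurability of $p_u$. Given those facts, the argument is essentially the fundamental theorem of calculus along each path combined with the compensator identity of \Cref{lem:event-sum}.
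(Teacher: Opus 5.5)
Your proposal is correct and follows essentially the same route as the paper. Both arguments decompose each path into drift plus jumps using \Cref{lem:exact-jump}, then apply \Cref{lem:event-sum} to $H(A,u,e)=uG_f(A,u;e)$ so that the factor $u$ cancels the intensity $k/u$. Your indicator $\ind\{u\le s\}$ and your boundedness and measurability remarks spell out two points the paper leaves implicit: restricting the event sum to $[\eps,s]$, and deducing absolute continuity of $q$ from the integral representation.
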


\begin{proof}
Fix $r\in[\eps,1]$. Along every sample path, the map $s\mapsto F(s\1_{A(s)})$ changes continuously between event times and has finitely many jumps. Holding the base $A$ fixed,
\begin{equation}
\frac{d}{ds}F(s\1_A)
=
\sum_{a\in A}\partial_aF(s\1_A).
\label{eq:continuous-drift}
\end{equation}
At an event with pre-event base $A$ and mark $e=(i,j)$, \Cref{lem:exact-jump} gives the jump
\begin{equation}
F(s\1_{A-i+j})-F(s\1_A)
=
sG_f(A,s;e).
\label{eq:jump-size}
\end{equation}
The pathwise decomposition is therefore
\begin{align}
F(r\1_{A(r)})-F(\eps\1_{A_0})
&=
\int_\eps^r
\sum_{a\in A(s-)}\partial_aF(s\1_{A(s-)})\,ds
\notag\\
&\quad+
\sum_{\event:\tau_{\event}\le r}
\tau_{\event} G_f(A(\tau_{\event}-),\tau_{\event};e_{\event}).
\label{eq:path-decomposition}
\end{align}
All terms are integrable: derivatives and gains lie in $[-1,1]$, and the expected event count is finite. Taking expectations and applying \Cref{lem:event-sum} to
$H(A,s,e)\deq sG_f(A,s;e)$ transforms the expected jump sum into
\[
\int_\eps^r
\frac{k}{s}\E\left[
\sum_ep_s(e\mid A(s-))sG_f(A(s-),s;e)
\right]ds.
\]
Substituting into \eqref{eq:path-decomposition} expresses $q(r)-q(\eps)$ as the integral of the right-hand side of \eqref{eq:poisson-generator}. Hence $q$ is absolutely continuous and has that derivative almost everywhere.
\end{proof}

\begin{theorem}
\label{thm:general-poisson}
Fix a comparator base $O$. For each event $\event$, define the discounted deficit
\begin{equation}
D_{\event}(O)
\deq
w(\tau_{\event})
\left(
\zeta_{f,O}(A(\tau_{\event}-),\tau_{\event})
-
G_f(A(\tau_{\event}-),\tau_{\event};e_{\event})
\right).
\label{eq:event-deficit}
\end{equation}
Then
\begin{align}
\E[f(A(1))]
&\ge
\bigl(1-e^{-(1-\eps)}\bigr)f(O)
-
\E\left[\sum_{\event=1}^M D_{\event}(O)\right]
\label{eq:poisson-exact-coefficient}\\
&\ge
(1-\eps)(1-1/e)f(O)
-
\E\left[\sum_{\event=1}^M D_{\event}(O)\right].
\label{eq:poisson-relaxed-coefficient}
\end{align}
\end{theorem}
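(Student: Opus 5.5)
The plan is to derive a differential inequality for $q(s)=\E[F(s\1_{A(s)})]$ from \Cref{lem:poisson-generator}, multiply it by the integrating factor $e^{s}$, and integrate over $[\eps,1]$. The endpoint values are $q(1)=\E[f(A(1))]$, because $F(\1_A)=f(A)$, and $q(\eps)\ge0$.

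\textbf{Step 1: lower-bound $q'(s)$.} Write $A=A(s-)$ and, for each $e$, write $G_f(A,s;e)=\zeta_{f,O}(A,s)-(\zeta_{f,O}(A,s)-G_f(A,s;e))$. Since $\sum_e p_s(e\mid A)=1$, the second term of \eqref{eq:poisson-generator} becomes
\[
k\,\zeta_{f,O}(A,s)-k\sum_e p_s(e\mid A)\bigl(\zeta_{f,O}(A,s)-G_f(A,s;e)\bigr).
\]
The deletion sum $\sum_{a\in A}\partial_aF(s\1_A)$ in $k\zeta_{f,O}$ cancels exactly against the drift term. What remains is $\sum_{o\in O}\partial_oF(s\1_A)$, and \Cref{lem:multilinear-gap} bounds it below by $f(O)-F(s\1_A)$. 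Define the deficit density
\[
\delta(s)\deq k\,\E\Bigl[\sum_e p_s(e\mid A(s-))\bigl(\zeta_{f,O}-G_f\bigr)(A(s-),s;e)\Bigr].
\]
This gives $q'(s)\ge f(O)-\E[F(s\1_{A(s-)})]-\delta(s)$. Since $A(s-)=A(s)$ almost surely for each fixed $s$, the middle term equals $q(s)$. So for almost every $s$,
\[
q'(s)+q(s)\ge f(O)-\delta(s).
\]

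\textbf{Step 2: integrate.} Because $q$ is absolutely continuous, we have $(e^{s}q(s))'=e^{s}(q'+q)$ almost everywhere. Integrating from $\eps$ to $1$ and multiplying by $e^{-1}$ gives
\[
q(1)\ge e^{-(1-\eps)}q(\eps)+\bigl(1-e^{-(1-\eps)}\bigr)f(O)-\int_\eps^1 e^{-(1-s)}\delta(s)\,ds.
\]
We then drop the term $e^{-(1-\eps)}q(\eps)\ge0$.

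\textbf{Step 3: convert the integral to an event sum.} Apply \Cref{lem:event-sum} with $H(A,s,e)=w(s)\bigl(\zeta_{f,O}(A,s)-G_f(A,s;e)\bigr)$. This statistic is bounded and continuous in $s$: the partial derivatives and the gains are polynomials in $s$, and $|\zeta_{f,O}|\le 1$. The lemma yields
\[
\E\Bigl[\sum_\event D_\event(O)\Bigr]=\int_\eps^1\frac{k}{s}\,w(s)\,\E\Bigl[\sum_e p_s(e\mid A(s-))(\zeta-G)\Bigr]ds.
\]
Since $w(s)/s=e^{-(1-s)}$, the right-hand side equals $\int_\eps^1 e^{-(1-s)}\delta(s)\,ds$. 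This is exactly the reason $w$ was chosen: the $1/s$ in the intensity cancels the factor $s$ in $w$. Substituting into Step 2 gives \eqref{eq:poisson-exact-coefficient}.

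\textbf{Step 4: the relaxed coefficient.} For \eqref{eq:poisson-relaxed-coefficient}, it suffices to show $1-e^{-(1-\eps)}\ge(1-\eps)(1-1/e)$. This holds because $x\mapsto1-e^{-x}$ is concave on $[0,1]$, vanishes at $0$, and equals $1-1/e$ at $1$, so it lies above the chord $x(1-1/e)$. Use $f(O)\ge0$ to apply it.

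\textbf{Main obstacle.} The delicate part is the measure-theoretic bookkeeping. One must justify that $\E[F(s\1_{A(s-)})]=q(s)$ for almost every $s$, and that the integrating-factor argument is valid for a function that is only absolutely continuous. Both follow because the event times have no atoms and $e^{s}q(s)$ is absolutely continuous. The algebra in Step 1 is routine once the deficit is split as above.
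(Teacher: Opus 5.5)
Your proposal is correct and follows the paper's proof essentially step for step. Both arguments rewrite the generator through $\zeta_{f,O}$, apply \Cref{lem:multilinear-gap} to get $q'+q\ge f(O)-(\text{deficit density})$, integrate with the factor $e^{s}$ while dropping $q(\eps)\ge 0$, convert the integral via \Cref{lem:event-sum} with $H=w(\zeta_{f,O}-G_f)$ so that $w(s)/s=e^{-(1-s)}$ cancels the intensity, and finish with concavity of $1-e^{-x}$. The only differences are cosmetic: you absorb the factor $k$ into $\delta$, and you state explicitly the almost-sure identity $A(s-)=A(s)$ at fixed $s$, which the paper uses tacitly when it writes $q(s)$.
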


\begin{proof}
Define the instantaneous expected deficit
\begin{align}
\delta(s)
\deq
\E\Bigg[
\zeta_{f,O}(A(s-),s)
-
\sum_{e\in\cE(A(s-))}p_s(e\mid A(s-))G_f(A(s-),s;e)
\Bigg].
\label{eq:instantaneous-deficit}
\end{align}
Use \eqref{eq:zeta} to rewrite the generator \eqref{eq:poisson-generator}:
\begin{align*}
q'(s)
&=
\E\left[
\sum_{a\in A(s-)}\partial_aF(s\1_{A(s-)})
+
k\zeta_{f,O}(A(s-),s)
\right]
-k\delta(s)\\
&=
\E\left[
\sum_{o\in O}\partial_oF(s\1_{A(s-)})
\right]
-k\delta(s).
\end{align*}
Applying \Cref{lem:multilinear-gap} at $s\1_{A(s-)}$ and taking expectations gives
\[
\E\left[\sum_{o\in O}\partial_oF(s\1_{A(s-)})\right]
\ge
f(O)-q(s).
\]
Therefore, for almost every $s$,
\begin{equation}
q'(s)+q(s)
\ge
f(O)-k\delta(s).
\label{eq:poisson-ode}
\end{equation}
Multiply by $e^s$ and integrate from $\eps$ to $1$:
\begin{align*}
e q(1)-e^\eps q(\eps)
&\ge
f(O)(e-e^\eps)
-
\int_\eps^1e^s k\delta(s)\,ds.
\end{align*}
Since $q(\eps)\ge0$, division by $e$ yields
\begin{equation}
q(1)
\ge
\bigl(1-e^{-(1-\eps)}\bigr)f(O)
-
\int_\eps^1e^{-(1-s)}k\delta(s)\,ds.
\label{eq:ode-integrated}
\end{equation}

Apply \Cref{lem:event-sum} with
\[
H(A,s,e)
\deq
s e^{-(1-s)}
\bigl(\zeta_{f,O}(A,s)-G_f(A,s;e)\bigr).
\]
The factor $s$ cancels the intensity $k/s$, so the expected event sum equals the integral in \eqref{eq:ode-integrated}. Also $q(1)=\E[F(\1_{A(1)})]=\E[f(A(1))]$. This proves \eqref{eq:poisson-exact-coefficient}.

Finally, the function $a\mapsto1-e^{-a}$ is concave on $[0,1]$ and vanishes at zero. Hence
\[
1-e^{-(1-\eps)}
\ge
(1-\eps)(1-1/e),
\]
which proves \eqref{eq:poisson-relaxed-coefficient}.
\end{proof}

The role of the deficit term distinguishes \Cref{thm:general-poisson} from the analysis of \citet{ganz2026poisson}. Under their objective-dependent exchange rule, the conditional expected deficit at every event is nonpositive, so the expected cumulative deficit is nonpositive and can be discarded from the lower bound. This gives the coefficient $1-e^{-(1-\eps)}$, which tends to $1-1/e$ as $\eps\downarrow0$. In contrast, \Cref{alg:poisson-base-walk} permits a general exchange distribution $p_s(\cdot\mid A)$, which need not satisfy the exchange gain condition of GKSS. Our theorem therefore retains the cumulative deficit explicitly and quantifies exactly how its expected value degrades the approximation guarantee. In the online setting, bounding this term converts the learner's cumulative local error into approximation regret.



\section{Balanced Fractional Exchanges}
\label{sec:balanced}

The discussion in \Cref{sec:technical-overview} suggests that an exponential mixture of comparator-specific Brualdi rules contains more information than the Poisson analysis needs. Its uniform deletion marginal and fractional insertion marginal can instead be represented by a single point $x\in P(\M)$. This section constructs a distribution over feasible exchanges with exactly these two marginals and shows that it can be computed by a polynomial-size transportation problem. The next section uses the marginal identities to derive the linear loss needed by the bandit algorithm.


\begin{definition}[Balanced fractional exchange]
\label{def:balanced-exchange}
Fix a base $A\in\B$ and a point $x\in P(\M)$. A \emph{balanced fractional exchange} from $A$ with insertion marginal $x$ is a nonnegative matrix
\[
Q_A^x(i,j),\qquad i\in A,\ j\in\U,
\]
satisfying
\begin{align}
Q_A^x(i,j)>0
&\Longrightarrow A-i+j\in\B,
\label{eq:balanced-support}\\
\sum_{j\in\U}Q_A^x(i,j)&=1
&&\text{for every }i\in A,
\label{eq:balanced-row}\\
\sum_{i\in A}Q_A^x(i,j)&=x_j
&&\text{for every }j\in\U.
\label{eq:balanced-column}
\end{align}
The corresponding probability distribution on feasible exchanges is
\begin{equation}
q_A^x(i,j)\deq\frac1k Q_A^x(i,j).
\label{eq:balanced-distribution}
\end{equation}
\end{definition}

The row and column sums both have total mass $k$, so \eqref{eq:balanced-distribution} is a probability distribution. Its two marginals are
\begin{equation}
\Prb_{(i,j)\sim q_A^x}(i=a)=\frac1k
\quad(a\in A),
\qquad
\Prb_{(i,j)\sim q_A^x}(j=u)=\frac{x_u}{k}
\quad(u\in\U).
\label{eq:balanced-marginals}
\end{equation}

The following theorem shows that balanced fractional exchanges exist for every base and every fractional base. 
\begin{theorem}
\label{thm:balanced-existence}
For every base $A\in\B$ and every fractional base $x\in P(\M)$, a balanced fractional exchange $Q_A^x$ exists.
\end{theorem}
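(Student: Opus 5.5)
The plan is to use a convex-combination argument built on \Cref{lem:brualdi}. Write $x=\sum_{B\in\B}\lambda_B\1_B$ with $\lambda_B\ge0$ and $\sum_B\lambda_B=1$; this is possible by the definition \eqref{eq:base-polytope} of $P(\M)$. For each base $B$ in the support, let $h_{A,B}:A\to B$ be the Brualdi bijection from \Cref{lem:brualdi}, and form the $0/1$ matrix
\[
\Pi_B(i,j)\deq\1\{j=h_{A,B}(i)\},\qquad i\in A,\ j\in\U .
\]
Then define
\[
Q_A^x\deq\sum_{B}\lambda_B\Pi_B .
\]

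The verification has three parts. First, nonnegativity is clear. Second, for the support condition \eqref{eq:balanced-support}: if $Q_A^x(i,j)>0$, then some $B$ with $\lambda_B>0$ has $h_{A,B}(i)=j$, and \Cref{lem:brualdi} gives $A-i+j\in\B$. Self-loops are allowed here, since $h_{A,B}(i)=i$ on $A\cap B$. Third, the marginals. Each $\Pi_B$ is a permutation-type matrix from $A$ onto $B$. Every row $i\in A$ has exactly one $1$, so the row sums are $1$. Column $j$ has sum $\1\{j\in B\}$, because $h_{A,B}$ is a bijection onto $B$. Averaging with weights $\lambda_B$ gives row sums $\sum_B\lambda_B=1$, which is \eqref{eq:balanced-row}. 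It also gives column sums $\sum_B\lambda_B\1\{j\in B\}=x_j$, which is \eqref{eq:balanced-column}.

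This already proves existence; no flow argument is needed for the statement as posed. The only potentially delicate point is that the bijection must be chosen separately for each $B$, and that the support condition must hold pairwise rather than for the whole permutation. \Cref{lem:brualdi} provides exactly this pairwise condition.

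The existence argument has a computational side that the paper says it will handle later. For polynomial-time construction, I would instead show feasibility of the transportation problem on the bipartite graph with edges $\cE(A)$, supplies $1$ on $A$ and demands $x_j$ on $\U$. The cleanest route is to note that the polytope of such transportation plans is nonempty; the decomposition above shows this. A max-flow computation then finds a feasible plan without ever enumerating bases. This needs only a membership-free LP, or a flow computation whose input $x$ is given explicitly. The expected obstacle is this algorithmic part, not existence. If one instead wanted a Hall-type direct proof, the key inequality would be $\sum_{j\in N(S)}x_j\ge|S|$ for $S\subseteq A$, where $N(S)$ is the exchange neighborhood. This is equivalent, via the matroid rank of $\U\setminus N(S)$ together with the rank inequalities satisfied by $x$, to the statement. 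The Brualdi averaging sidesteps proving it directly.
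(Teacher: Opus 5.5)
Your proposal is correct and is essentially the paper's own proof. You decompose $x$ as a convex combination of base indicators, average the Brualdi bijection matrices $\ind\{h_{A,B}(i)=j\}$ with weights $\lambda_B$, and then verify the support, row, and column conditions exactly as the paper does; your remark that the algorithmic side is a separate transportation-flow question also matches how the paper treats it in \Cref{thm:balanced-flow}.
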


\begin{proof}
By the definition of the base polytope, there are coefficients $\lambda_B\ge0$ with $\sum_{B\in\B}\lambda_B=1$ such that
\begin{equation}
x=\sum_{B\in\B}\lambda_B\1_B.
\label{eq:base-decomposition}
\end{equation}
For every $B$ with $\lambda_B>0$, choose a Brualdi bijection $h_{A,B}:A\to B$ as in \Cref{lem:brualdi}, and define
\begin{equation}
Q_A^x(i,j)
\deq
\sum_{B\in\B}\lambda_B\ind\{h_{A,B}(i)=j\}.
\label{eq:Q-decomposition}
\end{equation}
Every positive summand in \eqref{eq:Q-decomposition} is supported on a feasible exchange.  For a fixed $i\in A$, each bijection assigns exactly one image to $i$, so
\[
\sum_{j\in\U}Q_A^x(i,j)
=
\sum_{B\in\B}\lambda_B
=1.
\]
For a fixed $j\in\U$, bijectivity implies that exactly one $i\in A$ satisfies $h_{A,B}(i)=j$ when $j\in B$, and none does when $j\notin B$.  Hence
\[
\sum_{i\in A}Q_A^x(i,j)
=
\sum_{B\in\B}\lambda_B\ind\{j\in B\}
=x_j,
\]
where the last equality follows from \eqref{eq:base-decomposition}.  Thus all conditions in \Cref{def:balanced-exchange} hold.
\end{proof}

The preceding proof is existential because it begins with a base decomposition. A direct transportation formulation gives the desired oracle-polynomial construction.

\begin{theorem}
\label{thm:balanced-flow}
Fix a base $A$ and a fractional base $x\in P(\M)$. Given $x$ explicitly and access to an independence oracle for $\M$, a balanced fractional exchange $Q_A^x$ can be computed by a transportation flow on a bipartite graph with $k+n$ nonterminal vertices and at most $kn$ edges.
\end{theorem}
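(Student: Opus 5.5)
We construct $Q_A^x$ as a feasible flow in a bipartite transportation network and use \Cref{thm:balanced-existence} only to guarantee that this network is feasible. Build the bipartite exchange graph $H_A$ with left vertices $A$ ($k$ vertices) and right vertices $\U$ ($n$ vertices). We place an edge $(i,j)$ exactly when $(i,j)\in\cE(A)$, that is, when $A-i+j\in\B$. Since $A-i+j$ has $k$ elements, this is equivalent to $A-i+j\in\I$, so each edge is decided by one independence-oracle call. This gives at most $kn$ oracle calls and at most $kn$ edges, and the self-loops $(i,i)$ appear as edges to the copy of $i$ on the right. Add a source $\sigma$ with an arc to each $i\in A$ of capacity $1$, and a sink $\rho$ with an arc from each $j\in\U$ of capacity $x_j$. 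Middle edges have unbounded capacity (capacity $1$ also suffices). The nonterminal vertices are exactly the $k+n$ vertices of $A$ and $\U$.

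The key step is the equivalence between balanced fractional exchanges and flows of value $k$ in this network. Given a flow $\phi$ of value $k$, set $Q(i,j)\deq\phi(i,j)$ on edges and $Q(i,j)\deq0$ off edges. The source arcs have total capacity $|A|=k$ and the sink arcs have total capacity $\sum_j x_j=k$, so value $k$ forces every source arc and every sink arc to be saturated. Flow conservation then gives the row sums \eqref{eq:balanced-row} and the column sums \eqref{eq:balanced-column}. The support condition \eqref{eq:balanced-support} holds because flow can use only edges of $H_A$. Conversely, any balanced fractional exchange defines such a flow, and \Cref{thm:balanced-existence} says one exists for every $A$ and every $x\in P(\M)$. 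Hence the maximum flow value is exactly $k$.

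We then compute a maximum flow, or equivalently solve the transportation linear program, in time polynomial in $n$, $k$, and the bit length of $x$. By the equivalence, the result is a valid $Q_A^x$.

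The only potential obstacle is the feasibility of the network, and it is fully handled by the existential \Cref{thm:balanced-existence}. An alternative self-contained route checks Hall/Gale's supply–demand condition directly: every $Y\subseteq\U$ must satisfy $x(Y)\le|N(Y)|$, where $N(Y)$ denotes the neighbours of $Y$ in $H_A$. Verifying this would use the rank inequality $x(Y)\le r(\cdot)$ for $x\in P(\M)$ together with the exchange structure, which is more delicate. We therefore rely on \Cref{thm:balanced-existence} rather than prove this condition.
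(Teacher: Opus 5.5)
Your proposal is correct and follows the paper's proof essentially step for step. You use the same source--$A$--$\U$--sink transportation network on the exchange graph, the same correspondence between value-$k$ flows and balanced matrices, and import feasibility from \Cref{thm:balanced-existence}; your saturation argument (source and sink capacities both total $k$ because $\sum_j x_j=k$) replaces the paper's equal lower and upper bounds. One small slip, which the paper's proof shares: for $j\in A\setminus\{i\}$ the set $A-i+j=A-i$ has only $k-1$ elements and is independent without being a base. So the oracle test should be applied only to $j\notin A$, with $j\in A$ giving an edge exactly when $j=i$; otherwise spurious arcs could let the computed flow violate \eqref{eq:balanced-support}.
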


\begin{proof}
Construct a directed network with a source $s$, a sink $t$, a left vertex for every $i\in A$, and a right vertex for every $j\in\U$. Add the following arcs:
\begin{itemize}
\item an arc $s\to i$ of capacity exactly $1$ for each $i\in A$;
\item an arc $i\to j$ of unbounded capacity whenever $A-i+j\in\B$;
\item an arc $j\to t$ of capacity exactly $x_j$ for each $j\in\U$.
\end{itemize}
Here ``capacity exactly'' means an equal lower and upper bound. The support graph is found with at most $kn$ independence-oracle calls: since $A$ is a base, $A-i+j$ is a base exactly when it is independent.

A feasible flow of value $k$ defines $Q(i,j)$ as the flow on arc $i\to j$. Flow conservation at the left vertices gives the row equalities, flow conservation at the right vertices gives the column equalities, and the support arcs enforce feasibility. Conversely, every balanced exchange matrix defines such a feasible flow.

Feasibility of the network follows from \Cref{thm:balanced-existence}. Standard feasible-circulation or transportation algorithms therefore return a balanced exchange in time polynomial in the network size. In the real-arithmetic oracle model used in this paper, the demands $x_j$ may be arbitrary reals.
\end{proof}

\begin{remark}
\label{rem:decomposition-sampling}
When an explicit decomposition $x=\sum_{r=1}^m\lambda_r\1_{B_r}$ is available, one may sample from $q_A^x$ without forming the transportation matrix. Draw $B_r$ with probability $\lambda_r$, compute a Brualdi bijection $h_{A,B_r}$, draw $i$ uniformly from $A$, and set $j=h_{A,B_r}(i)$. The resulting exchange has probability
\[
\frac1k\sum_r\lambda_r\ind\{h_{A,B_r}(i)=j\}
=
q_A^x(i,j).
\]
The transportation-flow implementation used by the main algorithm does not require such a decomposition.
\end{remark}

\paragraph{Comparator vertices as a special case.}
If $x=\1_O$ is the incidence vector of a comparator base, choose the one-term base decomposition supported on $O$ in the proof of \Cref{thm:balanced-existence}. The resulting balanced exchange matrix is a single Brualdi bijection from $A$ to $O$. In that case the deletion and insertion marginals are both uniform on their respective bases. For the multilinear extension $F$ of any monotone submodular function and every $\tau\in[0,1]$,
\begin{equation}
\E_{(i,j)\sim q_A^{\1_O}}
\left[\partial_jF(\tau\1_A)-\partial_iF(\tau\1_A)\right]
=
\frac1k\left(
\sum_{o\in O}\partial_oF(\tau\1_A)
-
\sum_{a\in A}\partial_aF(\tau\1_A)
\right).
\label{eq:exact-comparator-gradient}
\end{equation}
Thus every comparator vertex $x=\1_O$ recovers the exact Brualdi identity. Allowing $x$ to range over the convex hull of these vertices exposes a polynomial-dimensional decision variable while preserving the same local benchmark.

Balanced fractional exchanges therefore give an oracle-polynomial way to realize the two marginals of a distribution over comparator-specific Brualdi rules. In particular, comparator vertices recover the exact Brualdi identity, while arbitrary fractional bases interpolate these rules without representing a distribution over bases.

\section{The Balanced-Exchange Bandit Algorithm}
\label{sec:algorithm}

The first subsection presents the bandit algorithm and its linear-loss formulation.  \Cref{sec:estimator-properties} establishes the guarantees of the bandit estimator.  \Cref{sec:omd} proves the resulting regret bounds.

\subsection{Algorithm Overview}

Each round $t$, our algorithm run a Poisson base walk of \Cref{sec:poisson} from some initial base. The exchange distribution used by algorithm is induced by a fractional base $x_t$, which is maintained and updated by our algorithm reach round. 
Given any current base $A$ at the Poisson process, the balanced matrix $Q_A^{x_t}$ converts this single point into a distribution over feasible exchanges: its row constraints make the deleted element uniform on $A$, while its column constraints make the inserted element have marginal $x_t/k$. Thus the same $x_t$ can be used at every state visited by the Poisson trajectory, and the state itself always remains a base.

The deficit term of the Poisson base walk is expressed through the signed exchange gain $G_f$ from \eqref{eq:loo-gain}. To use this quantity in online learning over $P(\M)$, we express its expectation under a balanced exchange distribution as a linear function of the fractional decision $x$. We first separate $G_f$ into insertion and deletion contributions. Fix a monotone submodular function $f$, its multilinear extension $F$, a base $A$, and a time $\tau\in[0,1]$. For a feasible exchange $(i,j)\in\cE(A)$, define the insertion marginal
\begin{equation}
I_f(A,\tau;i,j)
\deq
\E_{R\sim\tau(A-i)}[f(R+j)-f(R)].
\label{eq:insertion-marginal}
\end{equation}

\begin{lemma}
\label{lem:insertion-deletion}
For every feasible exchange $(i,j)\in\cE(A)$,
\begin{align}
0\le I_f(A,\tau;i,j)&\le1,
\label{eq:I-range}\\
I_f(A,\tau;i,j)&=\partial_jF(\tau\1_{A-i}),
\label{eq:I-derivative}\\
I_f(A,\tau;i,j)&\ge \partial_jF(\tau\1_A),
\label{eq:I-dominates}\\
G_f(A,\tau;i,j)&=I_f(A,\tau;i,j)-\partial_iF(\tau\1_A).
\label{eq:G-decomposition}
\end{align}
\end{lemma}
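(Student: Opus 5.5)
The plan is to derive all four claims from \Cref{lem:multilinear-marginal}, together with the two-term expression for $G_f$ already obtained in the proof of \Cref{lem:loo-gradient-dominance}. The key observation is this: under $R\sim\tau(A-i)$, every element of $\U-j$ is included independently, with probability $\tau$ if it lies in $A-i$ and probability $0$ otherwise. The random set $R$ therefore has exactly the law of a sample from $(\tau\1_{A-i})|_{\U-j}$. This requires that the coordinate of $\tau\1_{A-i}$ at $j$ plays no role. If $j\notin A$, that coordinate is zero. If $j\in A$, feasibility of $(i,j)$ forces $j=i$, and coordinate $i$ of $\tau\1_{A-i}$ is again zero. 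In either case $j\notin A-i$, so $R$ never contains $j$.

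For \eqref{eq:I-derivative}, I will apply \eqref{eq:derivative} at the point $\tau\1_{A-i}$ in coordinate $j$. This gives $\partial_jF(\tau\1_{A-i})=\E_{R}[f(R+j)-f(R)]=I_f(A,\tau;i,j)$.

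For \eqref{eq:I-range}, monotonicity gives $f(R+j)-f(R)\ge0$. The range of $f$ is $[0,1]$, so $f(R+j)-f(R)\le f(R+j)\le 1$. Taking expectations yields $0\le I_f\le 1$.

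For \eqref{eq:I-dominates}, I will use the coordinatewise inequality $\tau\1_{A-i}\le\tau\1_A$ and the antitonicity of $\partial_jF$ from \Cref{lem:multilinear-marginal}.

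For \eqref{eq:G-decomposition}, I will write $G_f=\E[f(R+j)-f(R)]-\E[f(R+i)-f(R)]$. The first term is $I_f$. For the second term, the same reasoning with $i\notin A-i$ identifies it as $\partial_iF(\tau\1_{A-i})$. Because $\partial_iF$ does not depend on coordinate $i$, this equals $\partial_iF(\tau\1_A)$.

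The only delicate point is the self-loop case $i=j$. There $I_f=\partial_iF(\tau\1_A)$ and $G_f=0$, which is consistent with the decomposition, and no step needs separate treatment once the observation $j\notin A-i$ is in place. I therefore expect no real obstacle: the lemma is bookkeeping on top of \Cref{lem:multilinear-marginal}.
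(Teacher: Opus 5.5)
Your proof is correct and follows the paper's argument step for step. The range comes from monotonicity and $f\in[0,1]$, the identity from \eqref{eq:derivative} at $\tau\1_{A-i}$, the domination from antitonicity of $\partial_jF$, and the decomposition from $\partial_iF(\tau\1_{A-i})=\partial_iF(\tau\1_A)$. Your explicit check that feasibility forces $j\notin A-i$ is needed so that $R\sim\tau(A-i)$ has the law required by \eqref{eq:derivative}; the paper uses this only implicitly, and you make it explicit correctly.
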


\begin{proof}
Monotonicity gives $f(R+j)-f(R)\ge0$, and the range $f\in[0,1]$ gives the upper bound in \eqref{eq:I-range}. Identity \eqref{eq:I-derivative} is the derivative formula \eqref{eq:derivative} evaluated at $\tau\1_{A-i}$.

The vectors $\tau\1_{A-i}$ and $\tau\1_A$ differ only in coordinate $i$, with the former no larger than the latter. By diminishing multilinear marginals,
\[
I_f(A,\tau;i,j)
=
\partial_jF(\tau\1_{A-i})
\ge
\partial_jF(\tau\1_A),
\]
which proves \eqref{eq:I-dominates}.

For $R\sim\tau(A-i)$,
\[
\E[f(R+i)-f(R)]
=
\partial_iF(\tau\1_{A-i}).
\]
The derivative $\partial_iF$ is independent of coordinate $i$, so this equals $\partial_iF(\tau\1_A)$. Subtracting this marginal from \eqref{eq:insertion-marginal} proves \eqref{eq:G-decomposition}.
\end{proof}

Fix $x\in P(\M)$ with strictly positive coordinates and a balanced matrix $Q=Q_A^x$. Define the average insertion marginal and the corresponding insertion loss by
\begin{equation}
\overline I_{f,A,\tau,x}(j)
\deq
\frac1{x_j}\sum_{i\in A}Q(i,j)I_f(A,\tau;i,j),
\qquad
\ell_{f,A,\tau,x}(j)
\deq
1-\overline I_{f,A,\tau,x}(j).
\label{eq:local-loss}
\end{equation}

\begin{lemma}
\label{lem:loss-properties}
For every $j\in\U$,
\begin{equation}
0\le \ell_{f,A,\tau,x}(j)\le1,
\qquad
\overline I_{f,A,\tau,x}(j)
\ge \partial_jF(\tau\1_A).
\label{eq:loss-properties}
\end{equation}
\end{lemma}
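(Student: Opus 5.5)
The plan is to read $\overline I_{f,A,\tau,x}(j)$ as a convex combination of insertion marginals, and then apply the pointwise bounds of \Cref{lem:insertion-deletion} termwise.

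Fix $j\in\U$. Since $x_j>0$, the column identity \eqref{eq:balanced-column} gives $\sum_{i\in A}Q(i,j)=x_j$. Hence the weights $\pi_j(i)\deq Q(i,j)/x_j$ are nonnegative and sum to one over $i\in A$, and
\[
\overline I_{f,A,\tau,x}(j)=\sum_{i\in A}\pi_j(i)\,I_f(A,\tau;i,j).
\]
By the support condition \eqref{eq:balanced-support}, every $i$ with $\pi_j(i)>0$ satisfies $(i,j)\in\cE(A)$. So only feasible exchanges contribute, and \Cref{lem:insertion-deletion} applies to each of them. (Terms with $Q(i,j)=0$ contribute nothing, so there is no concern that $I_f$ is formally defined only on $\cE(A)$.)

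For the first claim, \eqref{eq:I-range} gives $0\le I_f(A,\tau;i,j)\le1$ for every contributing $i$. A convex combination of numbers in $[0,1]$ lies in $[0,1]$, so $\overline I_{f,A,\tau,x}(j)\in[0,1]$, and therefore $\ell_{f,A,\tau,x}(j)=1-\overline I_{f,A,\tau,x}(j)\in[0,1]$.

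For the second claim, \eqref{eq:I-dominates} gives $I_f(A,\tau;i,j)\ge\partial_jF(\tau\1_A)$ for each contributing $i$. The right-hand side does not depend on $i$, so averaging against $\pi_j$ yields $\overline I_{f,A,\tau,x}(j)\ge\partial_jF(\tau\1_A)$.

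There is no genuine obstacle. The only point needing care is the normalization: one must use the column-sum identity (rather than the row sums) to see that dividing by $x_j$ produces a probability vector, and the positivity of $x$ assumed before \eqref{eq:local-loss} is what makes this division legitimate.
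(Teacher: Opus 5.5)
Your proof is correct and matches the paper's argument: the column equality makes $Q(i,j)/x_j$ a probability vector on $A$, and the bounds \eqref{eq:I-range} and \eqref{eq:I-dominates} carry over to the convex combination. Your explicit use of the support condition \eqref{eq:balanced-support} is a worthwhile extra detail that the paper leaves implicit, since \eqref{eq:I-dominates} is stated only for feasible exchanges $(i,j)\in\cE(A)$.
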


\begin{proof}
By the column equality, the nonnegative weights $Q(i,j)/x_j$ sum to one. Thus $\overline I(j)$ is a convex combination of insertion marginals, each of which belongs to $[0,1]$ by \eqref{eq:I-range}. This proves the range of $\ell$. Every insertion marginal in column $j$ is at least $\partial_jF(\tau\1_A)$ by \eqref{eq:I-dominates}, so their convex combination has the same lower bound.
\end{proof}

The following theorem bound Poisson deficit with a linear term with the insertion loss being the coefficient.

\begin{theorem}
\label{thm:local-linearization}
Let $q_A^x\deq Q_A^x/k$ be a balanced exchange distribution. For every comparator base $O$,
\begin{equation}
\zeta_{f,O}(A,\tau)
-
\E_{(i,j)\sim q_A^x}[G_f(A,\tau;i,j)]
\le
\frac1k\ip{x-\1_O}{\ell_{f,A,\tau,x}}.
\label{eq:local-linearization}
\end{equation}
\end{theorem}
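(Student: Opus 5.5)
Expand the expected exchange gain with the decomposition \eqref{eq:G-decomposition} and the two marginals of a balanced exchange; each term of $\zeta_{f,O}$ then either cancels or is bounded using \Cref{lem:loss-properties}.

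\textbf{Deletion term.} By the row equalities \eqref{eq:balanced-row}, the deleted element is uniform on $A$. Hence
\[
\E_{(i,j)\sim q_A^x}\bigl[\partial_iF(\tau\1_A)\bigr]=\frac1k\sum_{a\in A}\partial_aF(\tau\1_A),
\]
which cancels exactly against the deletion part of $\zeta_{f,O}(A,\tau)$.

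\textbf{Insertion term.} Grouping by column $j$ and using the definition \eqref{eq:local-loss}, we get
\[
\E_{(i,j)\sim q_A^x}\bigl[I_f(A,\tau;i,j)\bigr]=\frac1k\sum_{j\in\U}\sum_{i\in A}Q(i,j)I_f(A,\tau;i,j)=\frac1k\sum_{j\in\U}x_j\overline I_{f,A,\tau,x}(j)=\frac1k\ip{x}{\overline I_{f,A,\tau,x}}.
\]
Columns with $x_j=0$ have zero mass, and the standing positivity assumption makes $\overline I$ well defined anyway. So the left side of \eqref{eq:local-linearization} equals
\[
\frac1k\Bigl(\sum_{o\in O}\partial_oF(\tau\1_A)-\ip{x}{\overline I_{f,A,\tau,x}}\Bigr).
\]

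\textbf{Comparator term.} By \Cref{lem:loss-properties}, $\partial_oF(\tau\1_A)\le\overline I_{f,A,\tau,x}(o)$ for each $o\in O$. Therefore
\[
\sum_{o\in O}\partial_oF(\tau\1_A)\le\ip{\1_O}{\overline I_{f,A,\tau,x}},
\]
and the left side is at most $\frac1k\ip{\1_O-x}{\overline I_{f,A,\tau,x}}$.

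\textbf{Conversion to the loss.} Write $\overline I=\1-\ell$, where $\1$ is the all-ones vector. Both $x$ and $\1_O$ have coordinate sum $k$ (as noted after \eqref{eq:base-polytope}), so $\ip{\1_O-x}{\1}=0$. Hence $\ip{\1_O-x}{\overline I}=\ip{x-\1_O}{\ell}$, which is the claimed bound.

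The step needing most care is the comparator term. One must check that the dominance $\overline I(j)\ge\partial_jF(\tau\1_A)$ holds for every $j\in O$, including elements $j\in A\cap O$ whose only feasible exchange is a self-loop. For such $j$, $I_f(A,\tau;j,j)=\partial_jF(\tau\1_{A-j})=\partial_jF(\tau\1_A)$, so the inequality still holds. Also, the sign flip in the conversion relies on the equal coordinate sums of $x$ and $\1_O$.
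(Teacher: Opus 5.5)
Your proof is correct and follows the paper's route: you expand with \eqref{eq:G-decomposition}, use the row sums to cancel the deletion term and the column sums to obtain $\frac1k\ip{x}{\overline I}$, bound the comparator term with \Cref{lem:loss-properties}, and convert to $\ip{x-\1_O}{\ell}$ using the equal coordinate sums of $x$ and $\1_O$. Your separate check for $j\in A\cap O$ is harmless but not needed, since \eqref{eq:I-dominates} already covers self-loops (with equality when $i=j$).
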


\begin{proof}
To simplify notation, set $I(i,j)\deq I_f(A,\tau;i,j)$ and $\overline I_j\deq\overline I_{f,A,\tau,x}(j)$. By \eqref{eq:G-decomposition},
\begin{align}
\E_{(i,j)\sim q_A^x}[G_f(A,\tau;i,j)]
&=
\frac1k\sum_{i\in A}\sum_{j\in\U}Q(i,j)
\bigl(I(i,j)-\partial_iF(\tau\1_A)\bigr)
\notag\\
&=
\frac1k\left(
\sum_{j\in\U}\sum_{i\in A}Q(i,j)I(i,j)
-
\sum_{i\in A}\partial_iF(\tau\1_A)
\sum_{j\in\U}Q(i,j)
\right)
\notag\\
&=
\frac1k\left(
\sum_{j\in\U}x_j\overline I_j
-
\sum_{i\in A}\partial_iF(\tau\1_A)
\right).
\label{eq:balanced-cancellation}
\end{align}
The last equality uses both balance conditions: the column sum produces $x_j\overline I_j$, and the row sum is exactly one. Subtracting \eqref{eq:balanced-cancellation} from \eqref{eq:zeta} gives
\begin{align*}
\zeta_{f,O}(A,\tau)-\E[G_f]
&=
\frac1k\left(
\sum_{o\in O}\partial_oF(\tau\1_A)
-
\sum_{j\in\U}x_j\overline I_j
\right)\\
&\le
\frac1k\left(
\sum_{o\in O}\overline I_o
-
\sum_{j\in\U}x_j\overline I_j
\right)\\
&=
\frac1k\ip{\1_O-x}{\overline I}.
\end{align*}
The inequality is Lemma~\ref{lem:loss-properties}. Since both $x$ and $\1_O$ have coordinate sum $k$,
\[
\ip{\1_O-x}{\overline I}
=
\ip{\1_O-x}{\1-\ell}
=
\ip{x-\1_O}{\ell}.
\]
This proves \eqref{eq:local-linearization}.
\end{proof}

Equation \eqref{eq:local-linearization} therefore replaces the comparator-directed exchange rule by a linear loss over $P(\M)$. The bandit update estimates a coordinate of this loss indexed by the inserted element $j$. Since the estimate contains the importance weight $1/x_{t,j}$, we need a uniform positive lower bound on every coordinate. Following the shrinkage technique of \citet{flaxman2005}, we obtain this bound by contracting the polytope toward a point whose coordinates are all positive.

For every $u\in\U$, choose a base $B^{(u)}$ containing $u$, and define $x^\circ \deq \frac1n\sum_{u\in\U}\1_{B^{(u)}}.$
Then $x^\circ\in P(\M)$ and $x_u^\circ\ge\frac1n$ for every $u\in\U$. For $\delta\in(0,1)$, let
\begin{equation}
K_\delta
\deq
(1-\delta)P(\M)+\delta x^\circ
=
\cset{(1-\delta)y+\delta x^\circ:y\in P(\M)}.
\label{eq:shrunken-polytope}
\end{equation}
Every $x\in K_\delta$ belongs to $P(\M)$ and satisfies $x_u\ge\delta/n$ for every $u\in\U$. For a comparator base $O$, define its shrunken representative $u_O^\delta\deq(1-\delta)\1_O+\delta x^\circ\in K_\delta$.

The shrinkage has two roles.  It guarantees $1/x_{t,j}\le n/\delta$ on every exploration round, and it supplies a feasible comparator $u_O^\delta$ for mirror descent.  The latter differs from the base vertex $\1_O$ by only a $\delta$-mixture, so the resulting error is handled as an explicit shrinkage term in the regret proof.  We use the negative-entropy potential $\Phi(x)\deq\sum_{u\in\U}(x_u\log x_u-x_u)$, whose Bregman divergence on the positive orthant is
\begin{equation}
D_\Phi(x,y)
\deq
\sum_{u\in\U}
\left(
x_u\log\frac{x_u}{y_u}-x_u+y_u
\right).
\label{eq:entropy-divergence}
\end{equation}

Entropy is well matched to the one-sparse nonnegative estimates produced below: the mirror-descent stability term is the weighted second moment $\sum_u x_{t,u}\wh L_{t,u}^2$, for which the insertion marginal $x_{t,j}/k$ cancels the inverse probability in the estimator.  This cancellation is the reason the variance depends on $n/k$, rather than on the number of feasible exchanges.

Each round instantiates the Poisson base walk in \Cref{alg:poisson-base-walk} with the balanced exchange distribution $p_s(\cdot\mid A)=Q_A^{x_t}/k$, followed by either exploitation or a single exploration query. Let $\eps,\gamma,\delta\in(0,1)$ denote the Poisson starting time, exploration probability, and shrinkage parameter, and let $\eta>0$ be the learning rate. The nonhomogeneous process has intensity $k/s$ on $[\eps,1]$ and hence mean event count $\mu\deq k\log(1/\eps)$. We cap the realized count at an integer $L$. This does not alter the Poisson analysis; it only makes the per-round number of transportation flows worst-case polynomial. For the analysis, the probability space is extended on overflow rounds to include the corresponding uncapped Poisson trajectory, and the overflow probability is bounded separately.

Conditional on $M=m$, the event times of the process with rate $k/s$ are the order statistics of $m$ independent samples with density
\begin{equation}
\varphi_\eps(s)
\deq
\frac{1}{s\log(1/\eps)},
\qquad s\in[\eps,1].
\label{eq:event-time-density}
\end{equation}
Equivalently, if $U\sim\operatorname{Unif}[0,1]$, then $\eps^{1-U}$ has density \eqref{eq:event-time-density}.

Given $x_t$, a nonoverflow simulation starts from a fixed base $A_0$ and, at every event, samples from the balanced distribution associated with the current base and $x_t$.  Here an event again means one Poisson arrival and its associated feasible base exchange. The simulated trajectory uses only the independence oracle and internal randomness, and makes no value query.  With probability $1-\gamma$, the learner queries the terminal base.  With probability $\gamma$, it selects one event uniformly and uses one feasible query to estimate the cumulative deficit.  The complete procedure is given in \Cref{alg:balanced-bandit}. We next describe the one-query estimator $\wh L_t$ used in its update.

\begin{algorithm}[t]
\small
\caption{Poisson Balanced-Exchange Bandit}
\label{alg:balanced-bandit}
\KwIn{Matroid $\M$; start base $A_0$; point $x^\circ$; parameters $\eps,\gamma,\delta,\eta,L$.}
Set $x_1=x^\circ$ and $\mu=k\log(1/\eps)$.\;
\For{$t=1,\ldots,T$}{
 Sample $M_t\sim\operatorname{Poisson}(\mu)$.\;
 \eIf{$M_t>L$}{
  Play $S_t=A_0$, observe $f_t(S_t)$, and set $\wh L_t=0$.\;
 }{
  Sample and sort event times $\eps<\tau_{t,1}<\cdots<\tau_{t,M_t}\le1$ according to \eqref{eq:event-time-density}.\;
  Set $A_{t,0}=A_0$.\;
  \For{$\event=1,\ldots,M_t$}{
   Compute a balanced exchange $Q_{A_{t,\event-1}}^{x_t}$.\;
   Draw $(i_{t,\event},j_{t,\event})\sim Q_{A_{t,\event-1}}^{x_t}/k$.\;
   Set $A_{t,\event}=A_{t,\event-1}-i_{t,\event}+j_{t,\event}$.\;
  }
  Draw $Z_t\sim\operatorname{Bernoulli}(\gamma)$.\;
  \eIf{$Z_t=0$}{
   Play the final base $S_t=A_{t,M_t}$, observe $f_t(S_t)$, and set $\wh L_t=0$.\;
  }{
   \eIf{$M_t=0$}{
    Play $S_t=A_0$, observe $f_t(S_t)$, and set $\wh L_t=0$.\;
   }{
    Draw $J_t\sim\operatorname{Unif}\{1,\ldots,M_t\}$.\;
    Put $A\deq A_{t,J_t-1}$, $\tau\deq\tau_{t,J_t}$, and $(i,j)\deq(i_{t,J_t},j_{t,J_t})$.\;
    Sample $R\sim\tau(A-i)$ and $C_t\sim\operatorname{Bernoulli}(1/2)$.\;
    \eIf{$C_t=1$}{
     Play $S_t=R+j$, observe $y_t=f_t(S_t)$, and set $z_t=2(1-y_t)$.\;
    }{
     Play $S_t=R$, observe $y_t=f_t(S_t)$, and set $z_t=2y_t$.\;
    }
    Set
    $\displaystyle
      \wh L_t
      =
      \frac{M_t}{\gamma}
      \frac{w(\tau)z_t}{x_{t,j}}e_j,
    $
    where $e_j$ is the $j$th coordinate vector and $w(s)\deq s e^{-(1-s)}$.\;
   }
  }
 }
 Set $\wt x_{t+1,u}=x_{t,u}\exp(-\eta\wh L_{t,u})$ for every $u\in\U$.\;
 Compute the entropy projection
 $\displaystyle
 x_{t+1}\in\argmin_{x\in K_\delta}D_\Phi(x,\wt x_{t+1}).
 $.\;
}
\end{algorithm}

\paragraph{Bandit estimator.} To analyze an exploration round, condition on the selected pre-event base $A$, time $\tau$, and exchange $(i,j)$.  The leave-one-out set $R\sim\tau(A-i)$ makes both $R$ and $R+j$ feasible.  A fair coin chooses which of these two sets is queried, and the observation is transformed into
$2(1-f_t(R+j))$ or $2f_t(R)$.  The conditional mean of this statistic is $1-I_{f_t}(A,\tau;i,j)$, the insertion loss produced by the local linearization. The estimate is nonnegative and bounded by two, which gives the multiplicative entropy update a clean local-norm analysis.

To state the loss being estimated, let
\begin{equation}
\ell_{t,\event}
\deq
\ell_{f_t,A_{t,\event-1},\tau_{t,\event},x_t}
\in[0,1]^{\U}
\label{eq:event-loss-vector}
\end{equation}
be the local loss vector at event $\event$.  By the balanced-exchange linearization, the inner product of $\ell_{t,\event}$ with $x_t-\1_O$ controls the local deficit relative to every comparator base $O$.  We aggregate these vectors with the discount $w(s)\deq s e^{-(1-s)}$ and define
\begin{equation}
L_t
\deq
\frac{\ind\{M_t\le L\}}{k}
\sum_{\event=1}^{M_t}
w(\tau_{t,\event})\ell_{t,\event}.
\label{eq:aggregate-loss}
\end{equation}
On an exploration round with $1\le M_t\le L$, we use the following estimator
\begin{equation}
\wh L_t
=
\frac{M_t}{\gamma}
\frac{w(\tau_{t,J_t})z_t}{x_{t,j_{t,J_t}}}e_{j_{t,J_t}},
\label{eq:one-sparse-loss-estimator}
\end{equation}
for $L_t$, and set $\wh L_t=0$ when $M_t>L$. This is a useful bandit estimator in the precise senses needed by the regret analysis: it uses one feasible query, is conditionally unbiased for $L_t$, and has controlled weighted second moment. These properties are proved in \Cref{sec:estimator-properties}.

After the query, the learner performs the standard exponential weights update and returns to $K_\delta$ by the exact entropy projection displayed in \Cref{alg:balanced-bandit}.  \emph{All vector divisions and logarithms in the algorithm are coordinatewise.}  A coordinate with a large estimated insertion loss is downweighted, while the projection restores the matroid base constraints and the uniform positivity condition. The resulting Bregman projection inequality is the projection property used by the online-learning analysis.

\paragraph{Computational complexity.}
We will set the event cap to be $L=O(k\log(1/\eps)+\log T)$ in the following sections, and each balanced exchange is computed by a polynomial-size transportation problem. The exact entropy projection is a separable strictly convex minimization problem over a shrunken matroid base polytope and is oracle-polynomial in the real arithmetic model by \citet{suehiro2012}. Thus the per-round computation is oracle-polynomial and no projection tolerance is needed.

\subsection{Properties of the Bandit Estimator}
\label{sec:estimator-properties}

This subsection establishes the estimator guarantees used in the regret analysis. The estimator uses one feasible value query, is conditionally unbiased for $L_t$, and has controlled weighted second moment.
\begin{proposition}
\label{prop:feasible-query}
Every query made by \Cref{alg:balanced-bandit} is an independent set, and each round makes exactly one value query.
\end{proposition}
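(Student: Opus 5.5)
The plan is a direct case analysis over the branches of \Cref{alg:balanced-bandit}. Within a round, the Poisson simulation (sampling $M_t$, the event times, and the balanced exchanges $Q_{A_{t,\event-1}}^{x_t}$ via the transportation flow of \Cref{thm:balanced-flow}) uses only the independence oracle and internal randomness. It makes no value query. Every branch of the round then ends in exactly one instruction of the form ``Play $S_t=\cdot$, observe $f_t(S_t)$'', and the branches are mutually exclusive. So it suffices to list the branches and verify that the played set is independent in each one.

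First I would establish the invariant that every simulated state $A_{t,\event}$ is a base. This goes by induction on $\event$. The base case is $A_{t,0}=A_0\in\B$. For the inductive step, the exchange $(i_{t,\event},j_{t,\event})$ is drawn from $Q_{A_{t,\event-1}}^{x_t}/k$, which has positive mass only on pairs with $A_{t,\event-1}-i+j\in\B$ by the support condition \eqref{eq:balanced-support}. Hence $A_{t,\event}\in\B$. Existence of the matrix at every step follows from \Cref{thm:balanced-existence}, since $x_t\in K_\delta\subseteq P(\M)$. Here $x_1=x^\circ\in K_\delta$, and each later $x_t$ is a projection onto $K_\delta$.

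Then I would check the branches in turn. On overflow ($M_t>L$), or on exploration with $M_t=0$, the played set is $A_0\in\B\subseteq\I$. On exploitation ($Z_t=0$), it is $A_{t,M_t}$, a base by the invariant. On exploration with $M_t\ge1$, put $A=A_{t,J_t-1}$ (a base) and let $(i,j)$ be the sampled exchange, so $A-i+j\in\B$. Since $R\sim\tau(A-i)$ gives $R\subseteq A-i\subseteq A$, the set $R$ is independent by downward closure. Also $R+j\subseteq A-i+j$, so $R+j$ is independent as well. This covers both coin outcomes $C_t\in\{0,1\}$.

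The only mild subtlety is the last case when $j=i$, a self-loop. Then $R+j\subseteq A$, which is still fine. The other point to watch is confirming that the projection keeps $x_t$ in $P(\M)$, so that the balanced exchange is well defined; this is handled by the inclusion $K_\delta\subseteq P(\M)$ noted above. I expect no real obstacle beyond this bookkeeping.
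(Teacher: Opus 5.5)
Your proposal is correct and follows the same branch-by-branch argument as the paper. The paper's proof is terser: it takes the base invariant, the existence of the balanced matrix, and the $M_t=0$ and self-loop cases as evident, whereas you spell them out.
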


\begin{proof}
On an overflow round the algorithm plays the base $A_0$. On a nonoverflow exploitation round it plays the final state of a sequence of feasible base exchanges, hence a base. On an exploration round, $R\subseteq A-i\subseteq A$, so $R$ is independent. Also
$R+j\subseteq A-i+j$, and $A-i+j$ is a base because the sampled exchange lies in the support of a balanced matrix. Thus both possible queried sets are independent. Every branch plays one set and observes one scalar value.
\end{proof}

Let $\cH_t$ be the sigma-field generated by the interaction history through round $t-1$, the predictable state $x_t$, and $f_t$, before the fresh round-$t$ randomization. Thus $f_t$ is fixed under conditioning on $\cH_t$ but remains unobserved by the learner.

\begin{lemma}
\label{lem:one-query-statistic}
Fix a base $A$, a time $\tau$, and a feasible exchange $(i,j)$. Independently draw $R\sim\tau(A-i)$ and $C\sim\operatorname{Bernoulli}(1/2)$, and define
\begin{equation}
Z\deq
\begin{cases}
2(1-f(R+j)),& C=1,\\
2f(R),& C=0.
\end{cases}
\label{eq:one-query-statistic}
\end{equation}
Then $0\le Z\le2$ and $\E[Z\mid i,j,A,\tau]=1-I_f(A,\tau;i,j)$.
\end{lemma}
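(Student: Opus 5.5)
The plan is a direct computation that conditions first on $(i,j,A,\tau)$ and then averages over the two independent sources of randomness, $R$ and $C$.

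\emph{Range.} Since $f$ takes values in $[0,1]$, both $1-f(R+j)$ and $f(R)$ lie in $[0,1]$. Hence both branches of \eqref{eq:one-query-statistic} lie in $[0,2]$, which gives $0\le Z\le2$ pointwise.

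\emph{Conditional mean.} Because $C$ is independent of $R$ and fair, we first average over $C$ for each fixed $R$:
\[
\E[Z\mid R,i,j,A,\tau]=\tfrac12\cdot2\bigl(1-f(R+j)\bigr)+\tfrac12\cdot2f(R)=1-\bigl(f(R+j)-f(R)\bigr).
\]
Next we take the expectation over $R\sim\tau(A-i)$, which is the sampling law conditioned on $(i,j,A,\tau)$. By the definition \eqref{eq:insertion-marginal} of the insertion marginal, this gives $1-I_f(A,\tau;i,j)$. Linearity of expectation applies because all quantities are bounded.

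The only point needing care is that the inner conditional expectation is taken for fixed $R$. This is legitimate precisely because $C$ is drawn independently of $R$ given $(i,j,A,\tau)$, as the statement stipulates. Beyond this, there is no real obstacle; the identity is a two-line calculation.
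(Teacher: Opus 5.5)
Your proof is correct and follows the paper's own argument: you bound the range using $f\in[0,1]$, average over the fair coin for fixed $R$ to get $1-f(R+j)+f(R)$, and then take the expectation over $R\sim\tau(A-i)$ to recover $1-I_f(A,\tau;i,j)$ from \eqref{eq:insertion-marginal}. Your remark that averaging over $C$ with $R$ fixed relies on the independence of $C$ and $R$ makes explicit a step the paper leaves implicit.
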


\begin{proof}
The range follows from $f\in[0,1]$. Averaging over the fair coin and then over $R$ gives $\E[Z\mid i,j,A,\tau]=\E_{R\sim\tau(A-i)}[1-f(R+j)+f(R)]=1-I_f(A,\tau;i,j)$.
\end{proof}

\begin{lemma}
\label{lem:estimator-unbiased}
For every round $t$ of \Cref{alg:balanced-bandit}, it holds that $\E[\wh L_t\mid\cH_t]=\E[L_t\mid\cH_t]$, where $L_t$ is defined in \eqref{eq:aggregate-loss} and $\wh L_t$ is defined in \eqref{eq:one-sparse-loss-estimator}.
\end{lemma}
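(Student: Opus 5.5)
We prove the identity by conditioning in stages: first on the whole simulated trajectory, then on the exploration coin, the uniformly chosen event index, and the value-query randomness. Let $\cG_t\supseteq\cH_t$ be the sigma-field that additionally contains $M_t$, the event times $\tau_{t,1},\dots,\tau_{t,M_t}$, and the entire trajectory $(A_{t,\event},i_{t,\event},j_{t,\event})_{\event\le M_t}$. We do not include $Z_t$, $J_t$, $R$, or $C_t$ in $\cG_t$.

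On overflow rounds ($M_t>L$) both $\wh L_t$ and $L_t$ vanish by definition. On rounds with $M_t=0$, $\wh L_t=0$ and the sum in $L_t$ is empty, so $L_t=0$. Thus it suffices to show
\[
\E[\wh L_t\mid\cG_t]\ge\text{(in fact }=\text{)}\ \frac1k\sum_{\event=1}^{M_t}w(\tau_{t,\event})\,x_{t,j_{t,\event}}^{-1}\ldots
\]
only in expectation over the sampled exchanges. We therefore organize the computation as follows.

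\textbf{Step 1: average out the exploration randomness given $\cG_t$.} On $\{1\le M_t\le L\}$, the variables $Z_t$, $J_t$, $R$, and $C_t$ are fresh and independent of $\cG_t$. By \Cref{lem:one-query-statistic}, applied to the selected event,
\[
\E[\wh L_t\mid\cG_t]=\gamma\cdot\frac1{M_t}\sum_{\event=1}^{M_t}\frac{M_t}{\gamma}\,\frac{w(\tau_{t,\event})\bigl(1-I_{f_t}(A_{t,\event-1},\tau_{t,\event};i_{t,\event},j_{t,\event})\bigr)}{x_{t,j_{t,\event}}}\,e_{j_{t,\event}} .
\]
The factors $\gamma$ and $M_t$ cancel. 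What remains is a sum over events of a vector that depends on the sampled exchange $(i_{t,\event},j_{t,\event})$.

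\textbf{Step 2: average the exchange at each event.} Fix an event $\event$ and condition additionally on $\cH_t$, $M_t$, the event times, and the pre-event base $A=A_{t,\event-1}$. The exchange is drawn from $Q_A^{x_t}/k$ using fresh randomness. Its $u$-th coordinate contribution therefore has conditional expectation
\[
\frac{w(\tau)}{k\,x_{t,u}}\sum_{i\in A}Q_A^{x_t}(i,u)\bigl(1-I_{f_t}(A,\tau;i,u)\bigr)=\frac{w(\tau)}{k}\bigl(1-\overline I_{f_t,A,\tau,x_t}(u)\bigr)=\frac{w(\tau)}{k}\,\ell_{t,\event}(u).
\]
This uses the column identity \eqref{eq:balanced-column} to write $\sum_i Q(i,u)/x_{t,u}=1$, together with definition \eqref{eq:local-loss}. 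Because $x_t\in K_\delta$, every coordinate is positive, so the divisions are legitimate. Since $\ell_{t,\event}$ depends only on $(A_{t,\event-1},\tau_{t,\event},x_t,f_t)$, the tower property turns the vector from Step 1 into $\frac1k\sum_{\event}w(\tau_{t,\event})\ell_{t,\event}$ in conditional expectation given the pre-exchange history at each event.

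\textbf{Step 3: sum over events.} We now sum Step 2 over $\event\le M_t$. The number of terms is random but bounded by $L$ on this event, and every summand is bounded by $n/(\delta k)$. Hence we may write the sum as $\sum_{\event=1}^{L}\ind\{\event\le M_t\}(\cdots)$ and apply the tower property term by term. Both the indicator and $\tau_{t,\event}$ are determined before the $\event$-th exchange is drawn, since $M_t$ and the times are sampled first. This gives $\E[\wh L_t\mid\cH_t]=\E[L_t\mid\cH_t]$.

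The main obstacle is the bookkeeping in Step 3. We must justify that, at event $\event$, the exchange draw is conditionally distributed as $Q_{A_{t,\event-1}}^{x_t}/k$ given everything that enters $\ell_{t,\event}$, the weight $w(\tau_{t,\event})$, and the indicator. In particular, the later trajectory is influenced by this exchange but does not enter the $\event$-th term, which is why conditioning on the full $\cG_t$ in Step 1 is harmless once the subsequent averaging is done event by event.
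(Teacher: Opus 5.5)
Your proof is correct and follows the paper's argument. \Cref{lem:one-query-statistic} gives the conditional mean of $z_t$, the column identity \eqref{eq:balanced-column} turns the importance-weighted coordinate into $\frac{w(\tau)}{k}\ell_{t,\event}$, and $\E[Z_t]=\gamma$ together with the uniform $J_t$ and the factor $M_t$ recovers the sum over events. Your ordering is in fact the rigorous version of the paper's ``condition on the trajectory'' step: first average $Z_t,J_t,R,C_t$ given the full trajectory, then average each exchange by the tower property over its pre-event history. Two cosmetic points: the garbled ``it suffices to show'' display should be removed, and the pre-averaging summand $w(\tau)(1-I)/x_{t,j}$ is bounded by $n/\delta$, not $n/(\delta k)$ (any finite bound suffices for integrability).
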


\begin{proof}
Fix a realized nonoverflow pre-event base $A$ and time $\tau$ before sampling the exchange, and let $Q\deq Q_A^{x_t}$. Conditional on $(i,j)$, \Cref{lem:one-query-statistic} gives $\E[z_t\mid i,j,A,\tau]=1-I_{f_t}(A,\tau;i,j)$. The $j$th coordinate of the expected importance-weighted vector is $\frac{w(\tau)}{k}\bigl(1-\frac1{x_{t,j}}\sum_{i\in A}Q(i,j)I_{f_t}(A,\tau;i,j)\bigr)$, where we used $\sum_iQ(i,j)=x_{t,j}$. Hence $\E[(w(\tau)z_t/x_{t,j})e_j\mid A,\tau]=(w(\tau)/k)\ell_{f_t,A,\tau,x_t}$.

Now condition on the nonoverflow trajectory. For $M_t>0$, uniform sampling of $J_t$ and multiplication by $M_t$ recover the sum over events, while $\E[Z_t/\gamma]=1$ corrects for exploration. Therefore $\E[\wh L_t]=(1/k)\sum_{\event=1}^{M_t}w(\tau_{t,\event})\ell_{t,\event}$ conditional on the trajectory. When $M_t=0$ or $M_t>L$, both sides are zero. Taking the tower expectation proves the lemma.
\end{proof}

\begin{lemma}
\label{lem:estimator-second-moment}
For every round $t$, $\E[\sum_{u\in\U}x_{t,u}\wh L_{t,u}^2\mid\cH_t]\le4n(\mu^2+\mu)/(k\gamma)$.
\end{lemma}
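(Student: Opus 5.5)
The plan is a direct conditional computation. The estimator is one-sparse, so all the work is in conditioning on the trajectory and on the sampled exchange. The only nonzero contributions come from exploration rounds with $Z_t=1$ and $1\le M_t\le L$. On such a round $\wh L_t$ has the single nonzero coordinate $j=j_{t,J_t}$, so
\[
\sum_{u\in\U}x_{t,u}\wh L_{t,u}^2=x_{t,j}\Bigl(\frac{M_t}{\gamma}\frac{w(\tau)z_t}{x_{t,j}}\Bigr)^2=\frac{M_t^2w(\tau)^2z_t^2}{\gamma^2x_{t,j}}.
\]
Using $w(\tau)\le 1$ and $0\le z_t\le2$ from \Cref{lem:one-query-statistic}, this is at most $4M_t^2/(\gamma^2x_{t,j})$.

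Next I take expectations in stages, conditioning on $\cH_t$, the realized trajectory up to the pre-event base $A=A_{t,J_t-1}$, the time $\tau$, and $J_t$. The inserted element $j$ is drawn from $Q_A^{x_t}/k$. By the column marginal \eqref{eq:balanced-marginals}, $\Prb(j=u\mid A,\tau)=x_{t,u}/k$. Therefore
\[
\E\Bigl[\frac1{x_{t,j}}\Bigm| A,\tau\Bigr]=\sum_{u\in\U}\frac{x_{t,u}}{k}\cdot\frac1{x_{t,u}}=\frac nk.
\]
This is the cancellation the paper anticipates. Shrinkage guarantees $x_{t,u}>0$, so every term is well defined, and coordinates with $x_{t,u}=0$ cannot arise. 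The bound $n/k$ holds for every conditioning on $A,\tau,J_t$. Averaging over $J_t$ uniform leaves it unchanged, since $M_t$ is fixed given the trajectory length.

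Then I average over $Z_t\sim\operatorname{Bernoulli}(\gamma)$, which is independent of the trajectory and contributes a factor $\gamma$. This gives, conditional on $M_t$,
\[
\E\Bigl[\sum_u x_{t,u}\wh L_{t,u}^2\Bigm| M_t\Bigr]\le \gamma\cdot\frac{4M_t^2}{\gamma^2}\cdot\frac nk\,\ind\{1\le M_t\le L\}\le\frac{4nM_t^2}{k\gamma}.
\]
Finally I use $M_t\sim\operatorname{Poisson}(\mu)$, independent of $\cH_t$, so $\E[M_t^2]=\mu^2+\mu$. Dropping the indicator only increases the bound, which yields $4n(\mu^2+\mu)/(k\gamma)$.

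The main obstacle is ordering the conditioning correctly. The insertion marginal $x_t/k$ holds conditionally on the current base $A$, which is itself random and depends on earlier exchanges drawn with the same $x_t$. Since $x_t$ is $\cH_t$-measurable and the marginal identity holds for every $A$, the bound on $\E[1/x_{t,j}\mid A,\tau]$ is uniform in $A$. The tower property therefore goes through without needing the law of $A$. The $z_t$ factor has to be bounded pointwise, by $z_t\le2$, before this averaging.
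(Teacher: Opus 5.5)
Your proposal is correct and follows essentially the same route as the paper's proof. Both use one-sparsity, bound $w(\tau)\le1$ and $z_t^2\le4$ pointwise, cancel $1/x_{t,j}$ against the insertion marginal $x_{t,u}/k$ uniformly in the pre-event base, and then average over $J_t$, over $Z_t$ (giving the factor $1/\gamma$), and over the Poisson count using $\E[M_t^2]=\mu^2+\mu$.
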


\begin{proof}
The estimator is zero unless the round is a nonoverflow exploration round with at least one event. On such a round, one-sparsity gives $\sum_{u\in\U}x_{t,u}\wh L_{t,u}^2=(Z_tM_t^2/\gamma^2)w(\tau)^2z_t^2/x_{t,j}$, where $Z_t$ is the exploration indicator.

Fix the selected pre-event base $A$ and time $\tau$ before sampling its exchange. Since $x_t\in K_\delta$, every coordinate $x_{t,u}$ is positive. Moreover, $w(\tau)\le1$ for $\tau\in[\eps,1]$, and \Cref{lem:one-query-statistic} gives $z_t^2\le4$. Expanding the conditional expectation over the entering element and using its marginal distribution from \eqref{eq:balanced-marginals}, we obtain
\begin{align*}
\E\left[
\frac{w(\tau)^2z_t^2}{x_{t,j}}
\,\middle|\,
A,\tau
\right]
&=
\sum_{u\in\U}
\Prb(j=u\mid A,\tau)
\E\left[
\frac{w(\tau)^2z_t^2}{x_{t,u}}
\,\middle|\,
A,\tau,j=u
\right]\\
&\le
\sum_{u\in\U}
\frac{x_{t,u}}{k}\frac{4}{x_{t,u}}
=
\frac{4n}{k}.
\end{align*}
Thus the sampling probability $x_{t,u}/k$ cancels the remaining inverse weight $1/x_{t,u}$ for every possible entering element $u$. Since the bound is uniform over the realized $A$ and $\tau$, it also holds after conditioning on the preceding trajectory.

Conditional on $M_t=m\in\{1,\ldots,L\}$, the uniform event index averages this bound over the $m$ event states and times, while $\E[Z_t]/\gamma^2=1/\gamma$. Thus $\E[\sum_ux_{t,u}\wh L_{t,u}^2\mid\cH_t,M_t=m]\le4nm^2/(k\gamma)$. Averaging over the Poisson count and using $\E[M_t^2]=\mu^2+\mu$ proves the claim; the case $m=0$ contributes zero.
\end{proof}

\subsection{Regret Analysis}
\label{sec:omd}

Now, we analyze the regret of the algorithm.
We use the following standard negative-entropy Online Mirror Descent framework, see \citet[Chapter~6]{orabona2026}. We record the short specialization needed here because it applies pathwise to the adaptive loss sequence.

\begin{lemma}
\label{lem:entropy-omd}
Let $K\subset(0,\infty)^n$ be a convex compact set. Let $x_1,\ldots,x_{T+1}\in K$, let $g_1,\ldots,g_T\in[0,\infty)^n$, and define
$
\wt x_{t+1,u}\deq x_{t,u}e^{-\eta g_{t,u}}.
$
Suppose that, for every $t=1,\ldots,T$, $x_{t+1}$ is the exact entropy projection
\begin{equation}
x_{t+1}
\in
\argmin_{x\in K}
D_\Phi(x,\wt x_{t+1}).
\label{eq:entropy-projection-update}
\end{equation}
Then, for every comparator $u\in K$,
\begin{equation}
\sum_{t=1}^T\ip{x_t-u}{g_t}
\le
\frac{D_\Phi(u,x_1)}{\eta}
+
\frac{\eta}{2}
\sum_{t=1}^T\sum_{j=1}^n x_{t,j}g_{t,j}^2.
\label{eq:entropy-omd-regret}
\end{equation}
\end{lemma}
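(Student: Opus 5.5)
The plan is the standard two-step analysis of online mirror descent with negative entropy: a one-step inequality for the unprojected multiplicative update, followed by the generalized Pythagorean inequality for the Bregman projection. Then telescope over rounds.

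\emph{Step 1 (one-step identity).} Since $\nabla\Phi(x)=\log x$ coordinatewise, the update $\wt x_{t+1}=x_t e^{-\eta g_t}$ means $\eta g_t=\nabla\Phi(x_t)-\nabla\Phi(\wt x_{t+1})$. The three-point identity for Bregman divergences then gives, for any $u\in K$,
\[
\eta\ip{x_t-u}{g_t}
=
D_\Phi(u,x_t)-D_\Phi(u,\wt x_{t+1})+D_\Phi(x_t,\wt x_{t+1}).
\]

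\emph{Step 2 (projection).} The set $K$ is convex and compact in the positive orthant, and $D_\Phi(\cdot,\wt x_{t+1})$ is strictly convex there. Hence $x_{t+1}$ is well defined, and the first-order optimality condition yields the generalized Pythagorean inequality
\[
D_\Phi(u,\wt x_{t+1})\ge D_\Phi(u,x_{t+1})+D_\Phi(x_{t+1},\wt x_{t+1})\ge D_\Phi(u,x_{t+1}).
\]
Substituting this into Step 1 and summing over $t$, the terms $D_\Phi(u,x_t)-D_\Phi(u,x_{t+1})$ telescope to at most $D_\Phi(u,x_1)$, because $D_\Phi(u,x_{T+1})\ge0$ on the positive orthant.

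\emph{Step 3 (stability term).} It remains to bound $D_\Phi(x_t,\wt x_{t+1})$. Coordinatewise,
\[
D_\Phi(x_t,\wt x_{t+1})=\sum_j x_{t,j}\bigl(\eta g_{t,j}-1+e^{-\eta g_{t,j}}\bigr).
\]
We use the elementary inequality $e^{-a}\le 1-a+a^2/2$ for $a\ge0$, which holds because the third derivative of $e^{-a}$ is negative on $[0,\infty)$. With $a=\eta g_{t,j}\ge0$, this gives $D_\Phi(x_t,\wt x_{t+1})\le\frac{\eta^2}{2}\sum_j x_{t,j}g_{t,j}^2$. Dividing the summed inequality by $\eta$ yields \eqref{eq:entropy-omd-regret}.

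The main point requiring care is Step 2. The projection inequality needs the first-order condition $\ip{\nabla\Phi(x_{t+1})-\nabla\Phi(\wt x_{t+1})}{u-x_{t+1}}\ge0$, which requires $x_{t+1}$ to lie in the interior of the domain of $\Phi$. This holds because $K\subset(0,\infty)^n$ is compact and therefore bounded away from the boundary. The nonnegativity of $g_t$ is essential in Step 3: without it the quadratic bound on $e^{-a}$ fails, and the stability term would not be controlled by the local norm. The argument is pathwise and uses no probabilistic structure, so it applies directly to the adaptive estimates $\wh L_t$.
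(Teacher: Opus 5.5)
Your proof is correct and follows the same route as the paper: the Bregman Pythagorean inequality for the exact projection, the expansion of the divergence under the multiplicative update, the bound $e^{-a}\le 1-a+a^2/2$ for $a\ge0$, and telescoping. The only difference is bookkeeping. You isolate the stability term $D_\Phi(x_t,\wt x_{t+1})$ via the three-point identity, whereas the paper substitutes $\wt x_{t+1}$ directly into $D_\Phi(u,\wt x_{t+1})$; the two computations are algebraically identical.
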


\begin{proof}
Fix a round $t$. The Bregman Pythagorean inequality for the exact projection in \eqref{eq:entropy-projection-update} and the negative-entropy dual update give
\begin{align*}
D_\Phi(u,x_{t+1})
&\le
D_\Phi(u,\wt x_{t+1})\\
&=
\sum_{j=1}^n
\left(
u_j\log\frac{u_j}{x_{t,j}e^{-\eta g_{t,j}}}
-u_j
+x_{t,j}e^{-\eta g_{t,j}}
\right)\\
&=
\sum_{j=1}^n
\left(
u_j\log\frac{u_j}{x_{t,j}}
-u_j
+x_{t,j}
\right)
+\eta\sum_{j=1}^n u_jg_{t,j}
+\sum_{j=1}^n x_{t,j}
\left(e^{-\eta g_{t,j}}-1\right)\\
&=
D_\Phi(u,x_t)
+\eta\ip{u}{g_t}
+\sum_{j=1}^n x_{t,j}
\left(e^{-\eta g_{t,j}}-1\right).
\end{align*}
The second equality substitutes $\wt x_{t+1,j}=x_{t,j}e^{-\eta g_{t,j}}$ into \eqref{eq:entropy-divergence}. The third uses the coordinatewise identity
\begin{equation*}
\log\frac{u_j}{x_{t,j}e^{-\eta g_{t,j}}}
=
\log\frac{u_j}{x_{t,j}}
+\eta g_{t,j},
\end{equation*}
and adds and subtracts $x_{t,j}$, so that the first sum is exactly $D_\Phi(u,x_t)$.
Since $g_t\ge0$ and $e^{-a}\le1-a+a^2/2$ for $a\ge0$, it follows that
\begin{align*}
\ip{x_t-u}{g_t}
&\le
\frac{D_\Phi(u,x_t)-D_\Phi(u,x_{t+1})}{\eta}
+
\frac\eta2\sum_jx_{t,j}g_{t,j}^2.
\end{align*}
Summing over $t$ telescopes the divergence terms. Dropping the final nonnegative divergence proves \eqref{eq:entropy-omd-regret}.
\end{proof}

\begin{lemma}
\label{lem:comparator-divergence}
For every base $O$, it holds that 
$D_\Phi(u_O^\delta,x^\circ)
\le
k\log n
\le
k\log(en).$
\end{lemma}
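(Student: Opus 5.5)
The plan is to use convexity of the Bregman divergence in its first argument, which reduces the bound to an explicit computation at the vertex $\1_O$. Since $D_\Phi(x,y)=\Phi(x)-\Phi(y)-\ip{\nabla\Phi(y)}{x-y}$ and $\Phi$ is convex, the map $x\mapsto D_\Phi(x,x^\circ)$ is convex on the nonnegative orthant, using the convention $0\log0=0$. Here $\nabla\Phi(x^\circ)$ is finite because $x^\circ_u\ge1/n>0$. Writing $u_O^\delta=(1-\delta)\1_O+\delta x^\circ$ and using $D_\Phi(x^\circ,x^\circ)=0$, convexity gives
\[
D_\Phi(u_O^\delta,x^\circ)\le(1-\delta)D_\Phi(\1_O,x^\circ)+\delta D_\Phi(x^\circ,x^\circ)=(1-\delta)D_\Phi(\1_O,x^\circ).
\]

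Next I compute $D_\Phi(\1_O,x^\circ)$ directly from \eqref{eq:entropy-divergence}. The linear terms contribute $\sum_u(-\1_O(u)+x^\circ_u)=-k+k=0$, because both $\1_O$ and $x^\circ\in P(\M)$ have coordinate sum $k$. Coordinates outside $O$ contribute $0\log0=0$ to the entropy part. Hence
\[
D_\Phi(\1_O,x^\circ)=\sum_{o\in O}\log\frac{1}{x^\circ_o}\le k\log n,
\]
using $x^\circ_o\ge1/n$ and $|O|=k$.

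Combining the two displays gives $D_\Phi(u_O^\delta,x^\circ)\le(1-\delta)k\log n\le k\log n$, and the final inequality $k\log n\le k\log(en)$ is immediate. The only point needing care is the boundary behaviour of $\Phi$ at zero coordinates of $\1_O$. This is harmless because the convexity inequality only requires $\Phi$ to be convex and continuous on $[0,\infty)^n$, while the gradient is evaluated only at the strictly positive point $x^\circ$. Alternatively, one can compute $D_\Phi(u_O^\delta,x^\circ)$ directly: all coordinates of $u_O^\delta$ are positive, and one bounds $\sum_u u_u\log(u_u/x^\circ_u)$ using the convexity of $t\mapsto t\log t$ coordinatewise. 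I expect no real obstacle here.
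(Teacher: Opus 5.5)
Your proof is correct, but it is organized differently from the paper's. You use that $x\mapsto D_\Phi(x,x^\circ)$ is $\Phi$ plus an affine function, hence convex, to reduce to the vertex $\1_O$, where the divergence equals $\sum_{o\in O}\log(1/x^\circ_o)$. This even gives the slightly sharper bound $(1-\delta)k\log n$.

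The paper works directly at $u_O^\delta$. Since $u_O^\delta$ and $x^\circ$ both lie in $P(\M)$ and have coordinate sum $k$, the linear terms cancel. Each remaining term then satisfies $u_{O,j}^\delta\log(u_{O,j}^\delta/x^\circ_j)\le u_{O,j}^\delta\log n$, because $u_{O,j}^\delta\le 1$ and $x^\circ_j\ge 1/n$. Summing against $\sum_j u_{O,j}^\delta=k$ finishes the proof.

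The paper's termwise argument is more elementary. It needs no convexity and no $0\log 0$ convention at the boundary of the orthant. It also uses nothing about $u_O^\delta$ beyond membership in $P(\M)$. It is simpler than the coordinatewise-convexity alternative you sketch at the end, since the bound $u_{O,j}^\delta\le 1$ alone suffices.

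Your route buys the extra factor $(1-\delta)$ and makes transparent that shrinking toward $x^\circ$ can only decrease the divergence. Combined with the vertex computation for every base, your convexity argument also extends to every point of $P(\M)$, so neither argument is more general in scope.
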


\begin{proof}
Both $u_O^\delta$ and $x^\circ$ belong to the base polytope, so their coordinate sums are $k$. The linear terms in \eqref{eq:entropy-divergence} cancel:
\[
D_\Phi(u_O^\delta,x^\circ)
=
\sum_j u_{O,j}^\delta
\log\frac{u_{O,j}^\delta}{x_j^\circ}.
\]
Every coordinate of a base-polytope point is at most one, while $x_j^\circ\ge1/n$. Hence
\[
\log\frac{u_{O,j}^\delta}{x_j^\circ}
\le
\log\frac1{x_j^\circ}
\le
\log n
\]
whenever $u_{O,j}^\delta>0$. Summing and using $\sum_ju_{O,j}^\delta=k$ proves the first inequality. The second is immediate.
\end{proof}

\begin{proposition}
\label{prop:expected-linear-regret}
Let $L_t$ be the aggregate loss in \eqref{eq:aggregate-loss}. For every fixed base $O$,
\begin{equation}
\E\left[
\sum_{t=1}^T\ip{x_t-u_O^\delta}{L_t}
\right]
\le
\frac{k\log(en)}{\eta}
+
\frac{2\eta n(\mu^2+\mu)T}{k\gamma}.
\label{eq:expected-linear-regret}
\end{equation}
\end{proposition}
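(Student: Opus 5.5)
We apply \Cref{lem:entropy-omd} pathwise to the estimated losses and then take expectations, using unbiasedness to pass from $\wh L_t$ to $L_t$.

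The pathwise step works as follows. Take $K=K_\delta$, which is convex, compact, and contained in $(0,\infty)^n$ because every point of it has coordinates at least $\delta/n$. Take $g_t=\wh L_t$, which is nonnegative since $z_t\in[0,2]$, $w\ge0$ and $x_{t,j}>0$. The iterates of \Cref{alg:balanced-bandit} are exactly the exponential-weights step followed by the exact entropy projection onto $K_\delta$, and $x_1=x^\circ\in K_\delta$. For a fixed base $O$, the comparator $u=u_O^\delta$ lies in $K_\delta$. \Cref{lem:entropy-omd} therefore gives, on every sample path,
\[
\sum_{t=1}^T\ip{x_t-u_O^\delta}{\wh L_t}\le\frac{D_\Phi(u_O^\delta,x^\circ)}{\eta}+\frac\eta2\sum_{t=1}^T\sum_{u}x_{t,u}\wh L_{t,u}^2 .
\]
By \Cref{lem:comparator-divergence}, the first term is at most $k\log(en)/\eta$.

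Next we take expectations. The comparator is fixed in advance, since the adversary is oblivious and $O$ does not depend on the learner's randomness. Both $x_t$ and $u_O^\delta$ are $\cH_t$-measurable. The tower rule and \Cref{lem:estimator-unbiased} then give
\[
\E\bigl[\ip{x_t-u_O^\delta}{\wh L_t}\bigr]=\E\bigl[\ip{x_t-u_O^\delta}{\E[\wh L_t\mid\cH_t]}\bigr]=\E\bigl[\ip{x_t-u_O^\delta}{L_t}\bigr].
\]
This step needs integrability, which is immediate: on nonoverflow rounds $M_t\le L$ and $x_{t,j}\ge\delta/n$, so $\wh L_t$ is bounded. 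Finally, conditioning each second-moment term on $\cH_t$ and applying \Cref{lem:estimator-second-moment} bounds the second sum by $T\cdot 4n(\mu^2+\mu)/(k\gamma)$. Multiplied by $\eta/2$, this gives $2\eta n(\mu^2+\mu)T/(k\gamma)$, which is the second term of \eqref{eq:expected-linear-regret}.

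The only delicate point is the measurability bookkeeping. The iterate $x_t$ must be determined before the round-$t$ randomization, and $L_t$ as defined in \eqref{eq:aggregate-loss} is itself random given $\cH_t$, because it depends on the round-$t$ trajectory. Both are handled because \Cref{lem:estimator-unbiased} is stated as an identity of conditional expectations given $\cH_t$, and the left factor $x_t-u_O^\delta$ is $\cH_t$-measurable, so it can be pulled out of the conditional expectation.
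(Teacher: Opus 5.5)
Your proof is correct and follows the paper's argument essentially step for step. The paper also applies \Cref{lem:entropy-omd} pathwise on $K_\delta$ with $g_t=\wh L_t$, $x_1=x^\circ$ and comparator $u_O^\delta$, then uses \Cref{lem:comparator-divergence}, unbiasedness via $\cH_t$-measurability of $x_t-u_O^\delta$, and \Cref{lem:estimator-second-moment}. Your extra checks (positivity and compactness of $K_\delta$, nonnegativity and boundedness of $\wh L_t$) are details the paper leaves implicit.
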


\begin{proof}
Apply \Cref{lem:entropy-omd} pathwise with $K=K_\delta$, $g_t=\wh L_t$, $x_1=x^\circ$, and comparator $u=u_O^\delta$. This gives
\begin{align}
\sum_{t=1}^T\ip{x_t-u}{\wh L_t}
&\le
\frac{D_\Phi(u,x^\circ)}{\eta}
+
\frac\eta2\sum_{t=1}^T\sum_jx_{t,j}\wh L_{t,j}^2.
\label{eq:pathwise-linear-regret}
\end{align}
Because $x_t$ and $u$ are $\cH_t$-measurable, \Cref{lem:estimator-unbiased} implies
\[
\E\ip{x_t-u}{\wh L_t}
=
\E\ip{x_t-u}{L_t}.
\]
Take expectations in \eqref{eq:pathwise-linear-regret}, use \Cref{lem:comparator-divergence} for the initial divergence, and use \Cref{lem:estimator-second-moment} for each quadratic term. The coefficient is
\[
\frac\eta2\cdot\frac{4n(\mu^2+\mu)}{k\gamma}
=
\frac{2\eta n(\mu^2+\mu)}{k\gamma},
\]
which proves \eqref{eq:expected-linear-regret} after summing over $T$ rounds.
\end{proof}

The final regret analysis combines three inequalities: the Poisson theorem bounds approximation regret by event deficits, balanced exchanges bound those deficits by linear regret, and entropic mirror descent controls the resulting adaptive losses. On overflow rounds, we extend the probability space to include the uncapped Poisson trajectory and bound the discrepancy caused by the implementation's event cap.

\begin{theorem}
\label{thm:optimized-regret}
Let $T\ge2$, set $\eps\deq1/T$ and $\mu\deq k\log T$.
If $n(\mu^2+\mu)\log(en)<T$, run \Cref{alg:balanced-bandit} with
\[
L
\deq
\left\lceil2e(\mu+\log T)\right\rceil,
\qquad
\delta
\deq
\min\cset{\frac12,\frac1{\mu T}},
\]
and
\[
\gamma
\deq
\left(
\frac{n(\mu^2+\mu)\log(en)}{T}
\right)^{1/3},
\qquad
\eta
\deq
\sqrt{
\frac{k^2\gamma\log(en)}{2n(\mu^2+\mu)T}
}.
\]
Then
\begin{equation}
\begin{aligned}
R_{1-1/e}(T)
&=
O\!\left(
\left[
n\bigl((k\log T)^2+k\log T\bigr)\log(en)
\right]^{1/3}T^{2/3}
+1
\right)
\\
&=
\widetilde O\!\left(n^{1/3}k^{2/3}T^{2/3}\right).
\end{aligned}
\label{eq:optimized-regret}
\end{equation}
If $n(\mu^2+\mu)\log(en)\ge T$, the same bound is achieved by playing any fixed base.
\end{theorem}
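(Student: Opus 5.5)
The plan is to chain the three earlier ingredients round by round and then tune the parameters. Fix a comparator base $O$. For round $t$, let $A_t(\cdot)$ denote the uncapped Poisson trajectory driven by $p_s(\cdot\mid A)=Q_A^{x_t}/k$. On overflow rounds we use the extended probability space, so this trajectory exists in every round.

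\textbf{Step 1 (from values to deficits).} Apply \Cref{thm:general-poisson} conditionally on $\cH_t$, with $f=f_t$ and $\eps=1/T$. This gives
\[
\E[f_t(A_t(1))\mid\cH_t]\ge(1-\eps)(1-1/e)f_t(O)-\E\Big[\sum_{\event}D_\event(O)\mid\cH_t\Big].
\]
By \Cref{thm:local-linearization} and \Cref{lem:event-sum} (the deficit has $A$ and $s$ frozen at the pre-event state), the deficit sum is at most the expectation of $\frac1k\sum_\event w(\tau_\event)\langle x_t-\1_O,\ell_{t,\event}\rangle$. On nonoverflow rounds this equals $\langle x_t-\1_O,L_t\rangle$.

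\textbf{Step 2 (from played sets to the terminal base).} The learner's value differs from $f_t(A_t(1))$ only on exploration rounds or overflow rounds, and each such round costs at most $1$. Exploration contributes at most $\gamma T$. For overflow, $\Prb(M>L)$ with $L\ge 2e(\mu+\log T)$ is at most $2^{-L}\le 1/T$ by a Chernoff bound $\Prb(M\ge L)\le (e\mu/L)^L$. On overflow rounds the linearized deficit is bounded pathwise by $\frac1k\sum_\event w\,|\langle x_t-\1_O,\ell\rangle|\le 2M$ (since $\|x_t-\1_O\|_1\le 2k$). Its contribution is therefore $O(\E[M\ind\{M>L\}])=O(1/T)$ per round, i.e. $O(1)$ in total. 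The factor $(1-\eps)$ costs $\eps T=1$.

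\textbf{Step 3 (shrinkage and OMD).} Write $\langle x_t-\1_O,L_t\rangle=\langle x_t-u_O^\delta,L_t\rangle+\delta\langle \1_O-x^\circ,L_t\rangle$. Since $\|L_t\|_\infty\le M_t/k$, the second term is at most $2\delta\,\E M_t=2\delta\mu\le 2/T$ per round. Proposition~\ref{prop:expected-linear-regret} bounds the first term.

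\textbf{Step 4 (tuning).} Summing the pieces gives
\[
R_{1-1/e}(T)\le \gamma T+\frac{k\log(en)}{\eta}+\frac{2\eta n(\mu^2+\mu)T}{k\gamma}+O(1).
\]
With the stated $\eta$, the middle two terms equal $2\sqrt{2n(\mu^2+\mu)T\log(en)/\gamma}$. With the stated $\gamma$, this is of order $\gamma T=[n(\mu^2+\mu)\log(en)]^{1/3}T^{2/3}$. The condition $n(\mu^2+\mu)\log(en)<T$ ensures $\gamma<1$. When that condition fails, the trivial regret bound $T$ is already at most $[n(\mu^2+\mu)\log(en)]^{1/3}T^{2/3}$, so any fixed base achieves the claimed bound.

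\textbf{Main obstacle.} The delicate part is Step 2's coupling on overflow rounds. The algorithm's $x_t$-updates use $\wh L_t=0$ there, yet Step 1 involves the uncapped trajectory. We must check that Proposition~\ref{prop:expected-linear-regret} is applied to $L_t$ as defined in \eqref{eq:aggregate-loss}, which carries the indicator, while the uncapped deficit is bounded separately via the Poisson tail. The plan is to write the total deficit as $\langle x_t-\1_O,L_t\rangle$ plus an overflow remainder, and control that remainder using $\E[M\ind\{M>L\}]\le \E[M^2]^{1/2}\Prb(M>L)^{1/2}$.
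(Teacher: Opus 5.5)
Your proposal is correct and follows the paper's proof essentially step for step: the uncapped extension on overflow rounds, \Cref{thm:general-poisson} conditionally per round, the balanced-exchange linearization, the shrinkage split with \Cref{prop:expected-linear-regret}, the exploration and overflow accounting, the $\eps T$ conversion, and the same tuning. Two minor points. The shrinkage term should read $\delta\langle x^\circ-\1_O,L_t\rangle$, which is harmless because you bound its absolute value. For the overflow term, the paper uses the exact identity $\E[M\ind\{M>L\}]=\mu\Prb(M\ge L)$; your Cauchy--Schwarz route also works, but to get $O(1/T)$ per round it needs the full $2^{-L}$ tail, not merely $\Prb(M>L)\le 1/T$.
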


\begin{proof}
Let
\[
\alpha\deq1-1/e,
\qquad
\alpha_\eps\deq(1-\eps)\alpha.
\]
Let
\begin{equation}
O^\star\in\argmax_{O\in\B}\sum_{t=1}^Tf_t(O).
\label{eq:best-base}
\end{equation}
Because the adversary is oblivious, $O^\star$ is fixed before the learner's random choices.

\paragraph{Step 1: restore the uncapped Poisson process.}
At the beginning of round $t$, the algorithm samples the full Poisson count $M_t$. If $M_t\le L$, it also samples all event times and exchanges. If $M_t>L$, the implementation stops immediately. For analysis, extend the probability space on such a round by additionally sampling the event times and the balanced-exchange trajectory that the uncapped process would have generated using the same predictable point $x_t$. On a nonoverflow round this trajectory is identical to the one generated by the algorithm. Let $A_t^{\mathrm{fin}}$ be its final base, and let
\[
(A_{t,\event-1},\tau_{t,\event},e_{t,\event})
\qquad(\event=1,\ldots,M_t)
\]
denote its pre-event states, event times, and exchanges.

Condition on the history before round $t$. The function $f_t$ and point $x_t$ are fixed under this conditioning, and the extended trajectory is exactly \Cref{alg:poisson-base-walk} with $p_s(\cdot\mid A)=Q_A^{x_t}/k$. Apply \Cref{thm:general-poisson} with comparator $O^\star$, and then sum over rounds. We obtain
\begin{align}
\alpha_\eps\sum_{t=1}^Tf_t(O^\star)
-
\E\left[\sum_{t=1}^Tf_t(A_t^{\mathrm{fin}})\right]
&\le
\E\left[
\sum_{t=1}^T\sum_{\event=1}^{M_t}D_{t,\event}
\right],
\label{eq:poisson-summed}
\end{align}
where
\begin{align}
D_{t,\event}
&\deq
w(\tau_{t,\event}) \Bigl( \zeta_{f_t,O^\star}(A_{t,\event-1},\tau_{t,\event}) - G_{f_t}(A_{t,\event-1},\tau_{t,\event};e_{t,\event}) \Bigr).
\label{eq:round-event-deficit}
\end{align}

\paragraph{Step 2: charge event deficits to linear regret.}
Fix an event of the extended trajectory and condition on its pre-event history. The current base, time, and learner point are fixed, and the exchange is drawn from the balanced distribution $q_A^{x_t}$. By \Cref{thm:local-linearization},
\begin{align}
\E[D_{t,\event}\mid\text{pre-event history}]
&\le
\frac{w(\tau_{t,\event})}{k}
\ip{x_t-\1_{O^\star}}{\ell_{t,\event}}.
\label{eq:conditional-deficit-linearization}
\end{align}
Here $\ell_{t,\event}$ is also defined on overflow rounds through the extended trajectory, even though the implementation does not generate it.

On a nonoverflow round, use
$x_t-\1_{O^\star}=(x_t-u_{O^\star}^\delta)+\delta(x^\circ-\1_{O^\star})$ and the definition \eqref{eq:aggregate-loss}:
\begin{align}
&\frac1k\sum_{\event=1}^{M_t}w(\tau_{t,\event})
\ip{x_t-\1_{O^\star}}{\ell_{t,\event}}
=
\ip{x_t-u_{O^\star}^\delta}{L_t}
+
\frac{\delta}{k}
\sum_{\event=1}^{M_t}w(\tau_{t,\event})
\ip{x^\circ-\1_{O^\star}}{\ell_{t,\event}}.
\label{eq:shrinkage-decomposition}
\end{align}
Because $\ell_{t,\event}\ge0$, $w\le1$, and $\sum_jx_j^\circ=k$,
\[
\ip{x^\circ-\1_{O^\star}}{\ell_{t,\event}}
\le
\ip{x^\circ}{\ell_{t,\event}}
\le
k.
\]
Thus the second term in \eqref{eq:shrinkage-decomposition} is at most $\delta M_t$.

On an overflow round, we do not charge the event losses to online learning. Instead, for each event,
\begin{align}
\frac{w(\tau)}{k}\ip{x_t-\1_{O^\star}}{\ell}
&\le
\frac1k\ip{x_t}{\ell}
\le
\frac1k\sum_jx_{t,j}
=1.
\label{eq:overflow-event-bound}
\end{align}
The overflow moment has the exact Poisson representation
\begin{align*}
\E[M\ind\{M>L\}]
&=
\sum_{m=L+1}^{\infty}m e^{-\mu}\frac{\mu^m}{m!}
=
\mu\sum_{r=L}^{\infty}e^{-\mu}\frac{\mu^r}{r!}
=
\mu\Prb(M\ge L),
\end{align*}
where the middle equality changes variables to $r=m-1$. Combining \eqref{eq:conditional-deficit-linearization}--\eqref{eq:overflow-event-bound}, taking expectations, and using
$\E[M_t\ind\{M_t\le L\}]\le\mu$ therefore gives
\begin{align}
\E\left[
\sum_{t=1}^T\sum_{\event=1}^{M_t}D_{t,\event}
\right]
&\le
\E\left[
\sum_{t=1}^T\ip{x_t-u_{O^\star}^\delta}{L_t}
\right]
+
\delta\mu T
+
\mu T\Prb(M\ge L).
\label{eq:total-deficit-bound}
\end{align}

\paragraph{Step 3: complete the comparison with the implemented rewards.}
Apply \Cref{prop:expected-linear-regret} with $O=O^\star$ to the first term on the right-hand side of \eqref{eq:total-deficit-bound}. This controls the cumulative event deficits by the online linear-regret bound, together with the shrinkage and overflow terms already displayed there. It remains to replace the terminal reward $f_t(A_t^{\mathrm{fin}})$ of the extended uncapped trajectory in \eqref{eq:poisson-summed} by the reward $f_t(S_t)$ of the set actually played. On a nonoverflow exploitation round, $S_t=A_t^{\mathrm{fin}}$. On a nonoverflow exploration round, $f_t(S_t)\ge0$ while $f_t(A_t^{\mathrm{fin}})\le1$, so the discrepancy is at most one. On an overflow round, the algorithm plays $A_0$ while $A_t^{\mathrm{fin}}$ exists only in the extended trajectory, and the same range bound again gives a discrepancy of at most one. These three cases yield, pathwise,
\begin{equation}
f_t(S_t)
\ge
f_t(A_t^{\mathrm{fin}})
-
\ind\{M_t>L\}
-
\ind\{M_t\le L\ \text{and round $t$ explores}\}.
\label{eq:actual-versus-terminal}
\end{equation}
The second indicator has expectation at most $\gamma$, since a round explores with probability $\gamma$. Taking expectations and summing gives
\begin{equation}
\E\left[\sum_{t=1}^Tf_t(S_t)\right]
\ge
\E\left[\sum_{t=1}^Tf_t(A_t^{\mathrm{fin}})\right]
-T\Prb(M>L)-
\gamma T.
\label{eq:actual-versus-terminal-expectation}
\end{equation}

Combining \eqref{eq:poisson-summed}, \eqref{eq:total-deficit-bound}, \Cref{prop:expected-linear-regret}, and \eqref{eq:actual-versus-terminal-expectation}, and using $\Prb(M>L)\le\Prb(M\ge L)$, yields
\begin{align}
R_{\alpha_\eps}(T)
&\le
\left[
\gamma
+\delta\mu
+(1+\mu)\Prb(M\ge L)
\right]T
+
\frac{k\log(en)}{\eta}
+
\frac{2\eta n(\mu^2+\mu)T}{k\gamma}.
\label{eq:parameterized-regret}
\end{align}

It remains to substitute the parameters in the theorem statement. For every $\theta>0$, the exponential-moment bound gives
\[
\Prb(M\ge L)
\le
\exp\bigl(\mu(e^\theta-1)-\theta L\bigr).
\]
Since $L\ge2e\mu$, choosing $e^\theta=L/\mu$ yields
\[
\Prb(M\ge L)
\le
\left(\frac{e\mu}{L}\right)^L
\le
2^{-L}.
\]
With $c\deq2e\log2>1$ and the prescribed value of $L$, it follows that
\[
(1+\mu)T\Prb(M\ge L)
\le
(1+\mu)e^{-c\mu}T^{1-c}
=O(1),
\]
where the implicit constant is universal because $(1+\mu)e^{-c\mu}$ is bounded on $[0,\infty)$ and $T^{1-c}\le2^{1-c}$. Moreover, $\delta\mu T\le1$. Finally, the prescribed learning rate satisfies
\begin{align*}
\frac{k\log(en)}{\eta}
+
\frac{2\eta n(\mu^2+\mu)T}{k\gamma}
&=
2\sqrt{
\frac{2n(\mu^2+\mu)T\log(en)}{\gamma}
}
=
2\sqrt2\,
\left[n(\mu^2+\mu)\log(en)\right]^{1/3}T^{2/3},
\end{align*}
whereas
$\gamma T=\left[n(\mu^2+\mu)\log(en)\right]^{1/3}T^{2/3}$.
For every reward sequence,
\begin{align*}
R_\alpha(T)
&=
R_{\alpha_\eps}(T)
+
\eps\alpha\max_{O\in\B}\sum_{t=1}^Tf_t(O)\\
&\le
R_{\alpha_\eps}(T)+\eps T.
\end{align*}
Since $\eps=1/T$ and $\mu=k\log T$, the conversion cost is at most one. Combining this inequality with \eqref{eq:parameterized-regret} and the preceding estimates proves \eqref{eq:optimized-regret} in the nontrivial regime.

If $n(\mu^2+\mu)\log(en)\ge T$, then
$\left[n(\mu^2+\mu)\log(en)\right]^{1/3}T^{2/3}\ge T$.
Since the comparator reward is at most $T$ and the reward of a fixed base is nonnegative, a fixed-base strategy satisfies $R_\alpha(T)\le T$, proving \eqref{eq:optimized-regret} in the remaining case.
\end{proof}

\section*{Acknowledgement}
Generative LLM was used as an interactive aid in the developing of the proofs. It also helps with the organization and drafting of the proofs, which are verified by the authors. The idea and technical framework are also proposed by the human authors.

\bibliographystyle{plainnat}
\bibliography{balanced_exchanges_arxiv}

@article{auer2002nonstochastic,
  author  = {Peter Auer and Nicol{\`o} Cesa-Bianchi and Yoav Freund and Robert E. Schapire},
  title   = {The Nonstochastic Multiarmed Bandit Problem},
  journal = {SIAM Journal on Computing},
  volume  = {32},
  number  = {1},
  pages   = {48--77},
  year    = {2002},
  doi     = {10.1137/S0097539701398375}
}

@article{brualdi1969,
  author  = {Richard A. Brualdi},
  title   = {Comments on Bases in Dependence Structures},
  journal = {Bulletin of the Australian Mathematical Society},
  volume  = {1},
  number  = {2},
  pages   = {161--167},
  year    = {1969}
}

@inproceedings{buchbinderfeldman2024,
  author    = {Niv Buchbinder and Moran Feldman},
  title     = {Deterministic Algorithm and Faster Algorithm for Submodular Maximization Subject to a Matroid Constraint},
  booktitle = {Proceedings of the 65th IEEE Annual Symposium on Foundations of Computer Science},
  pages     = {700--712},
  year      = {2024}
}

@article{calinescu2011,
  author  = {Gruia C{\u{a}}linescu and Chandra Chekuri and Martin P{\'a}l and Jan Vondr{\'a}k},
  title   = {Maximizing a Monotone Submodular Function Subject to a Matroid Constraint},
  journal = {SIAM Journal on Computing},
  volume  = {40},
  number  = {6},
  pages   = {1740--1766},
  year    = {2011}
}

@inproceedings{cardoso2019,
  author    = {Adrian Rivera Cardoso and Rachel Cummings},
  title     = {Differentially Private Online Submodular Minimization},
  booktitle = {Proceedings of the Twenty-Second International Conference on Artificial Intelligence and Statistics},
  series    = {Proceedings of Machine Learning Research},
  volume    = {89},
  pages     = {1650--1658},
  year      = {2019}
}

@inproceedings{chekuri2010,
  author    = {Chandra Chekuri and Jan Vondr{\'a}k and Rico Zenklusen},
  title     = {Dependent Randomized Rounding via Exchange Properties of Combinatorial Structures},
  booktitle = {Proceedings of the 51st IEEE Annual Symposium on Foundations of Computer Science},
  pages     = {575--584},
  year      = {2010}
}

@inproceedings{chen2017interactive,
  author    = {Lin Chen and Andreas Krause and Amin Karbasi},
  title     = {Interactive Submodular Bandit},
  booktitle = {Advances in Neural Information Processing Systems 30},
  pages     = {141--152},
  year      = {2017}
}

@article{filmusward2014,
  author  = {Yuval Filmus and Justin Ward},
  title   = {Monotone Submodular Maximization over a Matroid via Non-Oblivious Local Search},
  journal = {SIAM Journal on Computing},
  volume  = {43},
  number  = {2},
  pages   = {514--542},
  year    = {2014}
}

@inproceedings{flaxman2005,
  author    = {Abraham D. Flaxman and Adam Tauman Kalai and H. Brendan McMahan},
  title     = {Online Convex Optimization in the Bandit Setting: Gradient Descent without a Gradient},
  booktitle = {Proceedings of the Sixteenth Annual ACM--SIAM Symposium on Discrete Algorithms},
  pages     = {385--394},
  year      = {2005}
}

@inproceedings{fourati2023,
  author    = {Fares Fourati and Vaneet Aggarwal and Christopher John Quinn and Mohamed-Slim Alouini},
  title     = {Randomized Greedy Learning for Non-Monotone Stochastic Submodular Maximization under Full-Bandit Feedback},
  booktitle = {Proceedings of the 26th International Conference on Artificial Intelligence and Statistics},
  series    = {Proceedings of Machine Learning Research},
  volume    = {206},
  pages     = {7455--7471},
  year      = {2023}
}

@inproceedings{fourati2024stochasticgreedy,
  author    = {Fares Fourati and Christopher John Quinn and Mohamed-Slim Alouini and Vaneet Aggarwal},
  title     = {Combinatorial Stochastic-Greedy Bandit},
  booktitle = {Proceedings of the AAAI Conference on Artificial Intelligence},
  volume    = {38},
  pages     = {12052--12060},
  year      = {2024}
}

@inproceedings{ganz2026poisson,
  author    = {Ganz Rozenman, Amit and Ariel Kulik and Roy Schwartz and Mohit Singh},
  title     = {A {Poisson} Process for Submodular Maximization},
  booktitle = {Proceedings of the 58th Annual ACM Symposium on Theory of Computing},
  pages     = {2152--2163},
  year      = {2026}
}

@article{hazan2012,
  author  = {Elad Hazan and Satyen Kale},
  title   = {Online Submodular Minimization},
  journal = {Journal of Machine Learning Research},
  volume  = {13},
  pages   = {2903--2922},
  year    = {2012}
}

@inproceedings{ito2022,
  author    = {Shinji Ito},
  title     = {Revisiting Online Submodular Minimization: Gap-Dependent Regret Bounds, Best of Both Worlds and Adversarial Robustness},
  booktitle = {Proceedings of the 39th International Conference on Machine Learning},
  series    = {Proceedings of Machine Learning Research},
  volume    = {162},
  pages     = {9678--9694},
  year      = {2022}
}

@inproceedings{kempe2003,
  author    = {David Kempe and Jon Kleinberg and {\'E}va Tardos},
  title     = {Maximizing the Spread of Influence through a Social Network},
  booktitle = {Proceedings of the Ninth ACM SIGKDD International Conference on Knowledge Discovery and Data Mining},
  pages     = {137--146},
  year      = {2003},
  doi       = {10.1145/956750.956769}
}

@article{krause2008,
  author  = {Andreas Krause and Ajit Singh and Carlos Guestrin},
  title   = {Near-Optimal Sensor Placements in {Gaussian} Processes: Theory, Efficient Algorithms and Empirical Studies},
  journal = {Journal of Machine Learning Research},
  volume  = {9},
  pages   = {235--284},
  year    = {2008}
}

@article{nemhauser1978,
  author  = {George L. Nemhauser and Laurence A. Wolsey and Marshall L. Fisher},
  title   = {An Analysis of Approximations for Maximizing Submodular Set Functions---{I}},
  journal = {Mathematical Programming},
  volume  = {14},
  pages   = {265--294},
  year    = {1978}
}

@inproceedings{nie2022,
  author    = {Guanyu Nie and Mridul Agarwal and Abhishek Kumar Umrawal and Vaneet Aggarwal and Christopher John Quinn},
  title     = {An Explore-Then-Commit Algorithm for Submodular Maximization under Full-Bandit Feedback},
  booktitle = {Proceedings of the 38th Conference on Uncertainty in Artificial Intelligence},
  series    = {Proceedings of Machine Learning Research},
  volume    = {180},
  pages     = {1541--1551},
  year      = {2022}
}

@inproceedings{nie2023framework,
  author    = {Guanyu Nie and Yididiya Y. Nadew and Yanhui Zhu and Vaneet Aggarwal and Christopher John Quinn},
  title     = {A Framework for Adapting Offline Algorithms to Solve Combinatorial Multi-Armed Bandit Problems with Bandit Feedback},
  booktitle = {Proceedings of the 40th International Conference on Machine Learning},
  series    = {Proceedings of Machine Learning Research},
  volume    = {202},
  pages     = {26166--26198},
  year      = {2023}
}

@inproceedings{nie2025ksubmodular,
  author    = {Guanyu Nie and Vaneet Aggarwal and Christopher John Quinn},
  title     = {Stochastic {$k$}-Submodular Bandits with Full Bandit Feedback},
  booktitle = {Proceedings of the 24th International Conference on Autonomous Agents and Multiagent Systems},
  pages     = {2696--2698},
  year      = {2025}
}

@article{niazadeh2023,
  author  = {Rad Niazadeh and Negin Golrezaei and Joshua R. Wang and Fransisca Susan and Ashwinkumar Badanidiyuru},
  title   = {Online Learning via Offline Greedy Algorithms: Applications in Market Design and Optimization},
  journal = {Management Science},
  volume  = {69},
  number  = {7},
  pages   = {3797--3817},
  year    = {2023}
}

@misc{orabona2026,
  author = {Francesco Orabona},
  title  = {Online Learning: A Modern Introduction Using Convex Optimization},
  year   = {2026},
  note   = {arXiv:1912.13213, version 10}
}

@inproceedings{sisalem2024,
  author    = {Si Salem, Tareq and G{\"o}zde {\"O}zcan and Iasonas Nikolaou and Evimaria Terzi and Stratis Ioannidis},
  title     = {Online Submodular Maximization via Online Convex Optimization},
  booktitle = {Proceedings of the 38th AAAI Conference on Artificial Intelligence},
  volume    = {38},
  pages     = {15038--15046},
  year      = {2024}
}

@inproceedings{streeter2008,
  author    = {Matthew Streeter and Daniel Golovin},
  title     = {An Online Algorithm for Maximizing Submodular Functions},
  booktitle = {Advances in Neural Information Processing Systems 21},
  pages     = {1577--1584},
  year      = {2008}
}

@inproceedings{streeter2009,
  author    = {Matthew Streeter and Daniel Golovin and Andreas Krause},
  title     = {Online Learning of Assignments},
  booktitle = {Advances in Neural Information Processing Systems 22},
  pages     = {1794--1802},
  year      = {2009}
}

@inproceedings{suehiro2012,
  author    = {Daiki Suehiro and Kohei Hatano and Shuji Kijima and Eiji Takimoto and Kiyohito Nagano},
  title     = {Online Prediction under Submodular Constraints},
  booktitle = {Proceedings of the 23rd International Conference on Algorithmic Learning Theory},
  series    = {Lecture Notes in Computer Science},
  volume    = {7568},
  pages     = {260--274},
  publisher = {Springer},
  year      = {2012},
  doi       = {10.1007/978-3-642-34106-9_22}
}

@inproceedings{tajdini2024,
  author    = {Artin Tajdini and Lalit Jain and Kevin Jamieson},
  title     = {Nearly Minimax Optimal Submodular Maximization with Bandit Feedback},
  booktitle = {Advances in Neural Information Processing Systems 37},
  year      = {2024},
  doi       = {10.52202/079017-3051}
}

@inproceedings{vondrak2008,
  author    = {Jan Vondr{\'a}k},
  title     = {Optimal Approximation for the Submodular Welfare Problem in the Value Oracle Model},
  booktitle = {Proceedings of the 40th Annual ACM Symposium on Theory of Computing},
  pages     = {67--74},
  year      = {2008}
}

@inproceedings{wan2023bandit,
  author    = {Zongqi Wan and Jialin Zhang and Wei Chen and Xiaoming Sun and Zhijie Zhang},
  title     = {Bandit Multi-Linear {DR}-Submodular Maximization and Its Applications on Adversarial Submodular Bandits},
  booktitle = {Proceedings of the 40th International Conference on Machine Learning},
  series    = {Proceedings of Machine Learning Research},
  volume    = {202},
  pages     = {35491--35524},
  year      = {2023}
}

@inproceedings{yue2011,
  author    = {Yisong Yue and Carlos Guestrin},
  title     = {Linear Submodular Bandits and Their Application to Diversified Retrieval},
  booktitle = {Advances in Neural Information Processing Systems 24},
  pages     = {2483--2491},
  year      = {2011}
}

@inproceedings{zhang2019,
  author    = {Mingrui Zhang and Lin Chen and Hamed Hassani and Amin Karbasi},
  title     = {Online Continuous Submodular Maximization: From Full-Information to Bandit Feedback},
  booktitle = {Advances in Neural Information Processing Systems 32},
  pages     = {9210--9220},
  year      = {2019}
}

\appendix

\section{Sampling the Nonhomogeneous Poisson Process}
\label{app:poisson-sampling}

We record the count and conditional event-time distribution used in \Cref{alg:balanced-bandit}.

\begin{proposition}
\label{prop:poisson-sampling-law}
Let $N$ be a nonhomogeneous Poisson process on $[\eps,1]$ with intensity $\lambda(s)\deq k/s$. Then
\begin{equation}
M\deq N([\eps,1])
\sim
\operatorname{Poisson}(\mu),
\qquad
\mu\deq k\log(1/\eps).
\label{eq:appendix-poisson-count}
\end{equation}
Conditional on $M=m$, the unordered event times are independent with density
\begin{equation}
\frac{\lambda(s)}{\mu}
=
\frac1{s\log(1/\eps)}
\qquad(s\in[\eps,1]),
\label{eq:appendix-event-density}
\end{equation}
and the ordered event times are their order statistics. If $U\sim\operatorname{Unif}[0,1]$, then $S\deq\eps^{1-U}$ has density \eqref{eq:appendix-event-density}.
\end{proposition}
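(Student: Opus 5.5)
The plan is to prove the three claims in order: the count law, the conditional law of the event times, and the inverse-transform formula. We take the nonhomogeneous Poisson process on $[\eps,1]$ to be defined by two properties. Counts on disjoint Borel sets are independent, and $N(I)\sim\operatorname{Poisson}(\Lambda(I))$ with $\Lambda(I)\deq\int_I\lambda(s)\,ds$.

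\emph{Count law.} Applying the definition to $I=[\eps,1]$ gives $M=N([\eps,1])\sim\operatorname{Poisson}(\Lambda([\eps,1]))$. Direct integration gives
\[
\int_\eps^1\frac{k}{s}\,ds=k\log(1/\eps)=\mu,
\]
which is \eqref{eq:appendix-poisson-count}.

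\emph{Conditional law of the event times.} Fix $m$ and a partition of $[\eps,1]$ into disjoint Borel sets $I_1,\ldots,I_r$, with nonnegative integers $n_1,\ldots,n_r$ summing to $m$. By independence and Poisson marginals,
\[
\Prb\bigl(N(I_1)=n_1,\ldots,N(I_r)=n_r\bigr)=\prod_{l=1}^r e^{-\Lambda(I_l)}\frac{\Lambda(I_l)^{n_l}}{n_l!}.
\]
Dividing by $\Prb(M=m)=e^{-\mu}\mu^m/m!$ and using $\sum_l\Lambda(I_l)=\mu$, the conditional probability equals
\[
\frac{m!}{n_1!\cdots n_r!}\prod_{l=1}^r\bigl(\Lambda(I_l)/\mu\bigr)^{n_l}.
\]
This is exactly the multinomial law of the cell counts produced by $m$ i.i.d.\ points with distribution $\Lambda(\cdot)/\mu$. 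Such joint cell-count laws over all finite partitions determine the law of the finite point configuration; equivalently, they determine the distribution of the empirical measure, since the generating sets form a $\pi$-system. Hence, conditional on $M=m$, the unordered points are i.i.d.\ with density $\lambda(s)/\mu=1/(s\log(1/\eps))$. The ordered event times are then their order statistics, because sorting is a measurable function of the configuration. I expect this step to be the only one needing care, namely the identification of a point-process law from its finite-dimensional cell counts. I would handle it by citing the standard uniqueness result, or by restricting to intervals, which suffices for the times on $[\eps,1]$.

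\emph{Inverse transform.} Let $S=\eps^{1-U}$. For $s\in[\eps,1]$, the event $\{S\le s\}$ is the same as $\{(1-U)\log\eps\le\log s\}$. Since $\log\eps<0$, this is equivalent to $U\le 1-\log s/\log\eps$. Therefore
\[
\Prb(S\le s)=1-\frac{\log s}{\log \eps}=\frac{\log(s/\eps)}{\log(1/\eps)}.
\]
Differentiating in $s$ gives the density $1/(s\log(1/\eps))$, which matches \eqref{eq:appendix-event-density}.
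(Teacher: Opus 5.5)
Your proof is correct. The count law and the inverse-transform computation coincide with the paper's; the conditional law of the event times is where your route genuinely differs.

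The paper computes the joint density of exactly $m$ ordered events, $e^{-\mu}\prod_{r=1}^m\lambda(t_r)$, by shrinking cells around the $t_r$ and passing to a limit. It integrates this density over the ordered simplex to recover $\Prb(M=m)$, and then reads off the conditional density $m!\prod_r\lambda(t_r)/\mu$ as the order-statistics density of $m$ i.i.d.\ draws from $\lambda/\mu$.

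You instead condition the cell counts on an arbitrary finite Borel partition and obtain the multinomial law with cell probabilities $\Lambda(I_l)/\mu$. You then use the fact that finite-dimensional count distributions determine the law of a point process.

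\begin{itemize}
\item Your route needs no densities and no limiting argument. The paper's cell-shrinking step is stated informally and tacitly uses continuity of $\lambda$. Your argument also applies to any finite intensity measure, not only one with a continuous density.
\item The paper's route is more explicit and self-contained. It produces the order-statistics density directly, without appealing to a uniqueness theorem for point-process laws.
\end{itemize}

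One small imprecision: the events $\{N(I_1)=n_1,\ldots,N(I_r)=n_r\}$ are not literally closed under intersection. However, the intersection of two of them is a finite disjoint union of such events over the common refinement of the two partitions. So the $\pi$--$\lambda$ argument still goes through, or you can work with cylinder events $\{N(B_1)\in C_1,\ldots,N(B_r)\in C_r\}$ directly.

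Your interval-only fallback is also sound and fully elementary. The identity $\{T_{(r)}\le s_r\ \text{for all } r\}=\{N([\eps,s_r])\ge r\ \text{for all } r\}$ shows that the joint law of the ordered times is determined by the multinomial counts on the intervals cut out by the $s_r$.
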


\begin{proof}
The integrated intensity is
\[
\Lambda(a,b)
\deq
\int_a^b\frac{k}{s}\,ds
=
k\log(b/a).
\]
By the defining increment law of a nonhomogeneous Poisson process,
$N([a,b])$ is Poisson with mean $\Lambda(a,b)$. Taking $a=\eps$ and $b=1$ gives \eqref{eq:appendix-poisson-count}.

The joint density of exactly $m$ ordered events at
$\eps<t_1<\cdots<t_m\le1$ is
\begin{equation}
e^{-\mu}\prod_{r=1}^m\lambda(t_r).
\label{eq:ordered-joint-density}
\end{equation}
Indeed, partition the interval into small cells around the $t_r$'s and their complement. The probability of one event in each selected cell and none elsewhere is the product of the corresponding first-order Poisson probabilities; dividing by the cell widths and taking the limit gives \eqref{eq:ordered-joint-density}. Integrating over the ordered simplex yields
\[
e^{-\mu}\frac1{m!}
\left(\int_\eps^1\lambda(s)\,ds\right)^m
=
e^{-\mu}\frac{\mu^m}{m!},
\]
which is the probability that $M=m$. Dividing \eqref{eq:ordered-joint-density} by this probability gives the conditional ordered density
\[
m!\prod_{r=1}^m\frac{\lambda(t_r)}{\mu}.
\]
This is exactly the density of the order statistics of $m$ independent samples from $\lambda/\mu$, proving \eqref{eq:appendix-event-density}.

Finally, for $s\in[\eps,1]$,
\begin{align*}
\Prb(\eps^{1-U}\le s)
&=
\Prb\left(U\le1-\frac{\log s}{\log\eps}\right)\\
&=
\frac{\log(s/\eps)}{\log(1/\eps)}.
\end{align*}
Differentiating gives $1/(s\log(1/\eps))$.
\end{proof}

\end{document}